\newcommand{\nensemble}{m_{\mathrm{ensemble}}}
\newcommand{\nstability}{m_{\mathrm{stability}}}
\newcommand{\nfeatures}{n_{\mathrm{feature}}}
\newcommand{\ntarget}{n_{\mathrm{target}}}
\newcommand{\nmeaningful}{n_{\mathrm{useful}}}
\newcommand{\nmeaningfulverification}{n_{\mathrm{useful}}^{\mathrm{v}}}
\newcommand{\parameterizedsimulator}{f_{\nmeaningful, \probability}}
\newcommand{\probability}{p}
\newcommand{\nprobability}{k_p}
\newcommand{\threshold}{t_\mathrm{uniform}}
\newcommand{\rankm}{\mathrm{rank}_m}
\newcommand{\ntrees}{n_\mathrm{tree}}
\newtheorem{theorem}{Theorem}
\newcommand{\nf}{n_\mathrm{f}}
\newcommand{\nm}{n_\mathrm{m}}
\newcommand{\nt}{n_\mathrm{t}}
\begin{document}

\twocolumn[
% \icmltitle{Fast Estimation Method for the Stability of Ensemble Feature Selectors}
\title{Fast Estimation Method for the Stability of \\ Ensemble Feature Selectors}

% It is OKAY to include author information, even for blind
% submissions: the style file will automatically remove it for you
% unless you've provided the [accepted] option to the icml2021
% package.

% List of affiliations: The first argument should be a (short)
% identifier you will use later to specify author affiliations
% Academic affiliations should list Department, University, City, Region, Country
% Industry affiliations should list Company, City, Region, Country

% You can specify symbols, otherwise they are numbered in order.
% Ideally, you should not use this facility. Affiliations will be numbered
% in order of appearance and this is the preferred way.
%\icmlsetsymbol{equal}{*}

\author{
    Rina Onda\thanks{
        Work performed during the internship at Preferred Networks.
	} \\
	Department of Physics, Graduate School of Science\\
	The University of Tokyo\\
	Tokyo, Japan \\
	\texttt{onda@icepp.s.u-tokyo.ac.jp} \\
	%% examples of more authors
	\And
	Zhengyan Gao \\
	Preferred Networks, Inc.\\
	Tokyo, Japan \\
	\texttt{zhengyan@preferred.jp} \\
	\AND
	Masaaki Kotera \\
	Preferred Networks, Inc.\\
	Tokyo, Japan \\
	\texttt{kotera@preferred.jp} \\
	\And
	Kenta Oono\thanks{
	    To whom correspondence should be addressed.
	} \\
	Preferred Networks, Inc.\\
	Tokyo, Japan \\
	\texttt{oono@perferred.jp} \\
}

% \begin{icmlauthorlist}
% \icmlauthor{Rina Onda}{ut}
% \icmlauthor{Zhengyan Gao}{pfn}
% \icmlauthor{Masaaki Kotera}{pfn}
% \icmlauthor{Kenta Oono}{pfn}
% \end{icmlauthorlist}

% \icmlaffiliation{ut}{Department of Physics, Graduate School of Science \\ The University of Tokyo}
% \icmlaffiliation{pfn}{Preferred Networks, Inc.}

% \icmlcorrespondingauthor{Kenta Oono}{oono@preferred.jp}

% You may provide any keywords that you
% find helpful for describing your paper; these are used to populate
% the "keywords" metadata in the PDF but will not be shown in the document
% \icmlkeywords{Machine Learning, ICML}

% \vskip 0.3in
% ]

% this must go after the closing bracket ] following \twocolumn[ ...

% This command actually creates the footnote in the first column
% listing the affiliations and the copyright notice.
% The command takes one argument, which is text to display at the start of the footnote.
% The \icmlEqualContribution command is standard text for equal contribution.
% Remove it (just {}) if you do not need this facility.

% \printAffiliationsAndNotice{}  % leave blank if no need to mention equal contribution
%\printAffiliationsAndNotice{\icmlEqualContribution} % otherwise use the standard text.
\maketitle

\begin{abstract}
It is preferred that feature selectors be \textit{stable} for better interpretabity and robust prediction.
Ensembling is known to be effective for improving the stability of feature selectors.
Since ensembling is time-consuming, it is desirable to reduce the computational cost to estimate the stability of the ensemble feature selectors.
We propose a simulator of a feature selector, and apply it to a fast estimation of the stability of ensemble feature selectors.
To the best of our knowledge, this is the first study that estimates the stability of ensemble feature selectors and reduces the computation time theoretically and empirically.
% This document provides a basic paper template and submission guidelines.
% Abstracts must be a single paragraph, ideally between 4--6 sentences long.
% Gross violations will trigger corrections at the camera-ready phase.
\end{abstract}

\vskip 0.3in
]

\section{Introduction}
\vskip -0.3in

Feature selection is a generic term for selecting useful features, from a set of features, for machine learning tasks.
Removing noisy features avoids the curse of dimensionality and improves the prediction performance~\citep{KHAIRE2019}.
An important benefit of feature selection is to promote interpretability prediction results.
That is, feature selection provides a better understanding of the underlying process that generated the data~\citep{guyon2003introduction}.
One approach for improving the interpretability of feature selection is to make the feature selection algorithm \textit{stable} or, in other words, robust~\citep{dunne2002solutions,kalousis2005stability,kalousis2007stability}.
Generally, feature selection results vary owing to the variability of the feature selection algorithm and dataset~\citep{doi:https://doi.org/10.1002/9781118617151.ch14}.
The term \textit{stable} means that the feature selection results do not vary significantly by such randomness.
It is reported that the lack of the stability of the feature selection algorithm makes it difficult to interpret its results, especially when we handle high-dimensional data such as gene expression measurements~\citep{Ein-Dor5923,davis2006reliable}.

\textit{Ensemble feature selection} is an often-used approach to improve the stability of feature selectors~\citep{10.1007/978-3-540-87481-2_21}.
The idea of ensemble feature selection is similar to that of ensemble learning.
An ensemble feature selector integrates the results of multiple feature selectors, which we call \textit{weak} selectors in this paper.
For a feature to be selected by the final ensemble feature selector, it needs to be selected by most of the weak selectors.
Therefore, like in ensemble learning, ensemble feature selectors are expected to be stable to the noise inherent in weak selectors and dataset.
\textit{Bagging} is an example ensemble algorithm. Given a single feature selector, bagging makes the copies of the selector, feeds a sub-sampled dataset to each copy, and aggregates the results.

Measuring the stability of ensemble feature selectors in advance is challenging.
Thus far, in order to calculate it, we have had to actually use them many times.
However, ensemble feature selectors are computationally expensive.
If an ensemble feature selector uses $m$ weak feature selectors, it is typically $m$ times slower than a single feature selector.
The computation costs of standard feature selection algorithms, such as random forest importance score and the minimum redundancy maximum relevance (mRMR) feature selection~\citep{ding2005minimum}, are high.
This problem becomes even more pronounced when we ensemble such costly algorithms.
To the best of our knowledge, no studies have focused on the fast computation of stability.

Another problem with ensemble selection is that it only provides a qualitative account of how an ensemble makes the final feature selector stable.
Indeed, many studies reported that an ensemble increases the stability of feature selectors~\citep{10.1007/978-3-540-87481-2_21,doi:https://doi.org/10.1002/9781118617151.ch14,pes2020ensemble}.
However, to the best of our knowledge, no study has quantitatively analyzed the conditions under which an ensemble is effective.
Because of this, we cannot decide whether an ensemble helps improve the stability of the feature selector until we try it many times.
This is a major constraint on the practical application of ensemble selectors.

To address these problems, we propose a fast simulation-based method for estimating the stability of ensemble selectors.
We construct a \textit{simulated} feature selector that emulates a real selector using two parameters, i.e., the number of features ($\nmeaningful$) that the feature selector prefers to select, and the probability ($\probability$), which reflects the uncertainty of the feature selectors and the dataset.
The main advantage of this method is that we can estimate the stability \textit{before} applying real ensemble selectors to real data. 
Normally, to calculate the stability of an ensemble selector, it needs to be run $O(\nstability \times \nensemble)$ times.
Here, $\nstability$ is the number of copies of the feature selector for calculating its stability, and $\nensemble$ is the number of weak selectors for constructing an ensemble selector.
Our proposed method only needs to run the feature selector $O(\nstability + \nensemble)$ times.
These two explainable parameters enable us to understand in what way the characteristics of the dataset and feature selector affect the stability of the ensemble selector.
We applied the proposed method to three different cancer datasets and confirmed the validity and computational efficiency of our proposed method.

\section{Method}

\subsection{Problem Settings}

Consider the problem to select $\ntarget$ features from the feature set $S$ consisting of $\nfeatures$ features.
We assign indices to the feature set as $S=\{1,\ldots , \nfeatures\}$.
A dataset $D$, a base feature selector $f$, and an ensemble algorithm $\mathcal{E}$ are provided.
We construct $\nensemble$ weak selectors from $f$, and build the final ensemble selector using $\mathcal{E}$ by integrating the results of the weak selectors.
The goal of the task is to estimate the stability of the ensemble selector.

The feature subsets that are selected by weak selectors can vary depending on the algorithmic randomness.
For instance, in bagging, each weak selector is obtained by applying the base selector to the sub-dataset that is randomly sampled from the dataset $D$. 
The algorithm randomness is observed also in feature selectors themselves, such as the importance scores of random forest.

We impose the following assumptions on the problem --- (1) The feature selector $f$ ranks (i.e., gives a total order to) the features in $S$, (2) The ensemble algorithm $\mathcal{E}$ takes an arbitrary number of ranked feature sets, and outputs a subset of features consisting of $\ntarget$ features, and (3) The stability computation takes an arbitrary number of feature subsets, each of which consists of $\ntarget$ features, as an input, and outputs a real value.
The concrete implementation is described in Section~\ref{sec:implementation}.

\begin{algorithm}[t]
    \caption{Ensemble feature selector stability estimation}
    \hspace*{\algorithmicindent} \textbf{Require:} Base feature selector $f$. Dataset $D$. \\ 
    \hspace*{\algorithmicindent} \phantom{\textbf{Require:}} Ensemble algorithm $\mathcal{E}$.\\
    \hspace*{\algorithmicindent} \textbf{Output:} Estimated stability value $J$ for the ensemble\\ 
    \hspace*{\algorithmicindent} \phantom{\textbf{Output:}} feature selector of $f$.      
    
    \begin{algorithmic}[1]\label{alg:simulated-ensemble-model}
        \STATE Determine the threshold $\threshold$ using the uniform feature selector (Section~\ref{sec:n-meaningful-estimation-method}).
        \STATE Estimate $\nmeaningful$ from $\threshold$ (Section~\ref{sec:n-meaningful-estimation-method}).
        \STATE Calculate the stability of $f$ using $D$.
        \STATE Estimate $\probability$ using the stability of $f$ (Section~\ref{sec:p-estimation-method}).
        \STATE (Optional) Verify $\nmeaningful$ (Section~\ref{sec:n-meaningful-verification}).
        \FOR{$j = 1, \ldots, \nstability$}
        \STATE Run Algorithm~\ref{alg:single-simulated-ensemble-model} to obtain the feature subset $s_j$ of a simulated ensemble feature selector.
        \ENDFOR
        \STATE Compute $J$ from $\{s_j \mid j = 1, \ldots, \nstability\}$.
    \end{algorithmic}
\end{algorithm}
\begin{algorithm}[t]
    \caption{Simulated ensemble feature selector}
    \hspace*{\algorithmicindent} \textbf{Require:} Simulator parameters $\nmeaningful$ and $\probability$. \\
    \hspace*{\algorithmicindent} \phantom{\textbf{Require:}} Ensemble algorithm $\mathcal{E}$.\\
    \hspace*{\algorithmicindent} \textbf{Output:} Feature selection result $s_j (\subset S)$ of a single\\
    \hspace*{\algorithmicindent} \phantom{\textbf{Output:}} simulated ensemble feature selector.
    
    \begin{algorithmic}[1]\label{alg:single-simulated-ensemble-model}
        \FOR{$m = 1, \ldots, \nensemble$}
            \STATE Generate the feature subset $S_m$ from $S'=\{1, \ldots, \nmeaningful\}$  (Section~\ref{sec:s-m-configuration}). 
            \STATE Run Algorithm~\ref{alg:simulated-feature-selector} to compute the feature rank $\rankm$ of the $m$-th weak feature selector $f_m$ constructed from $p$ and $S_m$ (Section~\ref{sec:simulated-feature-selector-modeling}).
        \ENDFOR
        \STATE Select $s_j$ by applying the ensemble algorithm $\mathcal{E}$ to $\{\rankm \mid m=1, \ldots, \nensemble\}$.
    \end{algorithmic}
\end{algorithm}

\begin{algorithm}[t]
    \caption{Simulated feature selector}
    \hspace*{\algorithmicindent} \textbf{Require:} Parameter $\probability\in [0, 1]$. Feature set $S_m (\subset S)$.\\
    \hspace*{\algorithmicindent} \textbf{Output:} Feature rank $\rankm$.

    \begin{algorithmic}[1]\label{alg:simulated-feature-selector}
        \FOR{$i = 1, \ldots , \nfeatures$}
            \STATE Sample $r$ from the uniform distribution over $[0, 1]$.
            \IF{($r < \probability$ \AND $S_m = \emptyset$) \OR ($r \geq \probability$ \AND $S\setminus S_m \not = \emptyset$)}
                \STATE Choose $s$ from $S\setminus S_m$ uniformly randomly.
            \ELSE
                \STATE Choose $s$ from $S_m$ uniformly randomly.
            \ENDIF
            \STATE $\rankm[s] = i$.
            \STATE Remove $s$ from $S$ (Either $S_m$ or $S\setminus S_m$ changes).
        \ENDFOR
    \end{algorithmic}
\end{algorithm}

\subsection{Model}\label{sec:models}

\subsubsection{Overview}\label{sec:model-overview}

To solve this problem, we propose a simulation-based estimation method.
The idea is to build a feature selector simulator that mimics the behavior of the base selector and to estimate the stability using the simulated ensemble feature selector instead of the real one.
Algorithm~\ref{alg:simulated-ensemble-model} shows the pseudocode for the proposed method,
which uses the simulated selector as elaborated in Algorithm~\ref{alg:single-simulated-ensemble-model}.
The algorithm constructs simulated selectors $f_1, \ldots , f_{\nensemble}$ that model the base selector $f$ (as well as the dataset $D$).
Then, it calculates the stability quickly by creating an ensemble of $f_m$'s.
The simulated selectors $f_m$ are controlled by two parameters: $\nmeaningful$ and $\probability$. 
The first parameter, $\nmeaningful$, can be interpreted as the number of the features that are useful for the task.
The second parameter, $\probability$, can be considered as the probability value that reflects the uncertainty derived from both feature selectors and the dataset.
We estimate these parameters by using the results obtained from running the real selector.
For now, we assume that these parameters have already been estimated.
The estimation method is described in Section~\ref{sec:model-parameter-estimation}.

\subsubsection{Simulated Feature Selector Modeling}\label{sec:simulated-feature-selector-modeling}
In this problem setting, weak selectors use the same base feature selection algorithm,
and their behavior varies according to the algorithmic randomness, such as the stochastic training and dataset subsampling.
To mimic these behaviors, we model the simulated weak selector $f_m$ as follows:
Each feature selector $f_m$ has a unique feature subset $S_m \subset S$ ($|S_m|=\ntarget$) from which $f_m$ tends to choose more frequently than from the other features.
The definition of $S_m$ is in Section~\ref{sec:s-m-configuration}.
To model the effect of the randomness of the feature selectors and sub-datasets, we determine the behavior of $f_m$ as follows (Algorithm~\ref{alg:simulated-feature-selector}):
$f_m$ samples features from $S_m$ with probability $\probability \in [0, 1]$ and from $S\setminus S_m$ with probability $1-\probability$ without replacement.
Algorithm~\ref{alg:simulated-feature-selector} repeats the selection until $f_m$ picks up all the features, then ranks the features by the order they are selected.

We assume that the parameter $\probability$ is the same for all the weak selectors $f_m$ and is determined by the base selector $f$ and dataset $D$.
If $\probability=1$, the top $\ntarget$ features of the rank determined by $f_m$ are $S_m$ since we have $|S_m|=\ntarget$.
Therefore, $S_m$ can be considered as the feature set selected by $f_m$ under the ideal situation with no noise in the dataset.
However, the reality is $\probability < 1$, implying that $f_m$ chooses other features with a relatively low probability.

\subsubsection{Configuration of Feature Set \texorpdfstring{$S_m$}{}}\label{sec:s-m-configuration}

It is natural to assume that the feature subsets selected by the weak selectors are likely to have large overlaps.
To model this, we set the feature set $S_m$ as follows.
Let $S'=\{1, \ldots, \nmeaningful\}\subset S$ be a pool of features from which the base feature selectors are likely to select.
For each $m$, we uniformly randomly sample $\ntarget$ features from $S'$
and form the feature subset $S_m$.
Note that we defined $S'$ as the first $\nmeaningful$ indices of $S$ to simplify the algorithm.
Any subset of $S$ with $\nmeaningful$ elements is eligible for $S'$.

\subsection{Model Parameter Estimation}\label{sec:model-parameter-estimation}

In this section, we propose a method to estimate that two parameters: $\nmeaningful$ and $\probability$.

\subsubsection{Estimation of Parameter \texorpdfstring{$\nmeaningful$}{}}\label{sec:n-meaningful-estimation-method}

We define the \textit{uniform feature selector} by a feature selector that chooses a feature subset of size $\ntarget$ uniformly randomly from $S$.
Recall that the parameter $\nmeaningful$ represents the number of features that are useful for the task. 
The key to estimating $\nmeaningful$ is that, when $\probability$ is large, any feature in $S'$ is chosen by the simulated feature selector more likely than by the uniform feature selector.
Mathematically, the following theorem claims that the first iteration of Algorithm~\ref{alg:simulated-feature-selector} selects a feature from $S'$ more likely than from $S\setminus S'$.
\begin{theorem}\label{thm:justification-of-n-meaningful-estimation}
    Suppose we choose one feature $s$ from $S$ by the following process.
    First, we choose $\ntarget$ features from $S'$ uniformly randomly (we denote the set of selected features by $S_0$).
    Then, we choose a feature $s$ uniformly randomly from $S_0$ with probability $\probability$ and from $S\setminus S_0$ with probability $1-\probability$.
    If $\probability >\ntarget / \nfeatures$, then, for any feature in $S'$, the probability that the feature is chosen as $s$ is higher than $1/\nfeatures$.
    If $\probability < \ntarget / \nfeatures$, the opposite is true.
\end{theorem}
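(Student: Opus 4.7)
The plan is a direct calculation of the marginal probability that a fixed feature $i\in S'$ is chosen as $s$, followed by a comparison with $1/\nfeatures$. Throughout, write $n=\nfeatures$, $t=\ntarget$, $u=\nmeaningful$, and assume $t<u<n$ so that the sampling is well-defined. By the symmetry of the first (uniform) draw of $S_0$ from $S'$, each element of $S'$ is included in $S_0$ with probability $t/u$, so I would condition on the event $\{i\in S_0\}$.

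First I would handle the two conditional cases. If $i\in S_0$, then $s$ equals $i$ only on the event that the second stage samples from $S_0$ (probability $\probability$) and happens to pick $i$ (probability $1/t$), giving $\probability/t$. If $i\notin S_0$, then $i$ still lies in $S\setminus S_0$, so $s=i$ only when the second stage draws from $S\setminus S_0$ (probability $1-\probability$) and picks $i$ (probability $1/(n-t)$), giving $(1-\probability)/(n-t)$. Combining these with the weights $t/u$ and $(u-t)/u$ produces
\begin{equation*}
\Pr(s=i)=\frac{\probability}{u}+\frac{(u-t)(1-\probability)}{u(n-t)}=\frac{\probability(n-u)+(u-t)}{u(n-t)}.
\end{equation*}

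Next I would compare this with $1/n$. Multiplying through by $n\,u(n-t)>0$, the inequality $\Pr(s=i)>1/n$ becomes $n\probability(n-u)+n(u-t)>u(n-t)$, and the right-hand side expands as $u(n-t)=n(u-t)+t(n-u)$, so the inequality reduces to $n\probability(n-u)>t(n-u)$. Since $n-u>0$, this is equivalent to $\probability>t/n=\ntarget/\nfeatures$, and the reverse inequality holds when $\probability<\ntarget/\nfeatures$. This yields both halves of the theorem simultaneously.

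There is no real obstacle beyond bookkeeping; the only subtlety worth flagging is the conditional independence structure, namely that given $i\in S_0$ the chance of the second stage hitting $i$ is $1/t$ (uniform on $S_0$) and given $i\notin S_0$ it is $1/(n-t)$ (uniform on $S\setminus S_0$). Once those are correctly identified, the remaining algebra is routine, and the clean cancellation of $(n-u)$ explains why the threshold depends only on $\ntarget/\nfeatures$ and not on $\nmeaningful$.
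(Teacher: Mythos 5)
Your proposal is correct and follows essentially the same route as the paper's proof: condition on whether the fixed feature lands in $S_0$ (probability $\ntarget/\nmeaningful$), compute the marginal probability $\frac{(\nfeatures-\nmeaningful)\probability+(\nmeaningful-\ntarget)}{\nmeaningful(\nfeatures-\ntarget)}$, and compare with $1/\nfeatures$, with the factor $(\nfeatures-\nmeaningful)$ cancelling to leave the threshold $\probability=\ntarget/\nfeatures$. The only difference is cosmetic (you clear denominators where the paper factors the difference $p_0-p_1$ directly), and your explicit flagging of the standing assumption $\ntarget<\nmeaningful<\nfeatures$ is a small but welcome addition.
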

We give the proof of Theorem~\ref{thm:justification-of-n-meaningful-estimation} in Section~\ref{sec:proof-of-theorem1} in the supplementary.
On the basis of this consideration, we propose to estimate $\nmeaningful$ by the following procedure.
We run the uniform feature selector the same number of times as the number of weak selectors (i.e., $\nensemble$ times).
Let $\threshold$ be the number of times the most frequent feature is selected.
Then, we run the base feature selector $\nensemble$ times.
Let $c_i$ be the number of times the $i$-th feature is chosen for the feature selectors.
We define $\nmeaningful$ by the number of $c_i$ that is larger than $\threshold$, that is,
\begin{align*}
    \threshold &= \max \{c'_i \mid i = 1, \ldots, \nfeatures\}, \\
    \nmeaningful &= \sum_{i=1}^{\nfeatures}1[c_i > \threshold].
\end{align*}
Here, $c'_i$ is the number of uniform feature selectors that selected the $i$-th feature, and $1[\cdot]$ is an indicator function, i.e., $1[P]=1$ if the proposition $P$ is true and $0$ otherwise.

\subsubsection{Estimation of Parameter \texorpdfstring{$\probability$}{}}\label{sec:p-estimation-method}

The parameter $\probability$ is estimated from the stability of the base selector and $\nmeaningful$ that was determined in Section~\ref{sec:n-meaningful-estimation-method}.
As explained in Section~\ref{sec:stability-estimation-of-ensemble-efature-selectors}, we observe a monotonic relationship between $\probability$ and the stability value: the smaller $\probability$ is, the lower the stability is.
This is intuitively explained as follows.
On the one hand, as $\probability$ becomes large, feature selectors select features more often from the feature set $S'$.
On the other hand, in a high-dimensional setting, we usually expect that the number of useful features is much smaller than the feature dimension, that is, $\nmeaningful \ll \nfeatures$.
For instance, it is estimated that $\nmeaningful (=|S'|)$ is approximately $60$ and is much smaller than $\nfeatures (=|S|) =2000$ in Colon dataset used in the experiments of Section~\ref{sec:results}.
Under the condition $\nmeaningful \ll \nfeatures$, the more likely feature selectors select from $|S'|$, the larger the overlap between the outputs of the feature selectors will be, implying that the stability increases.

Using this monotonic relationship between $\probability$ and the stability value, we propose to estimate the parameter $\probability$ as follows: we apply a series of simulated ensembles (Algorithm~\ref{alg:simulated-feature-selector}) for discrete values of $\probability$, and choose the parameter $\probability$ whose estimated stability is the closest to that of the real selector.
In this study, we used the nine values of $\probability=0.1, 0.2, \ldots, 0.9$.

\subsubsection{Verification of Parameter \texorpdfstring{$\nmeaningful$}{}}\label{sec:n-meaningful-verification}

In this section, we propose another method that estimates $\probability$ and simultaneously verifies the estimated $\nmeaningful$ at the cost of additional simulations.
We need this method because the estimation method in Section~\ref{sec:n-meaningful-estimation-method} may overestimate $\nmeaningful$ when $\probability$ is small, as suggested by Theorem~\ref{thm:justification-of-n-meaningful-estimation}.
We can intuitively understand it as follows:
When $\probability$ is small, the algorithm generates many false negatives (i.e., features that are in $S'$ but are not selected only less frequently compared with the threshold $\threshold$), and false positives (i.e., features that are not in $S'$ but are selected more frequently than the threshold).
Under the condition $|S'| \ll |S|$, it is expected that false negatives occurs more likely than false positives, which results in the overestimation of $\nmeaningful$.
We verify this analysis empirically in Section~\ref{sec:p-estimation}.

Given the parameters $\probability$ and $\nmeaningful$, we can \textit{recompute} $\nmeaningful$ by comparing the uniform feature selector and the simulated selector constructed from these parameters.
More specifically, we use the same algorithm to estimate $\nmeaningful$ as the one described in Section~\ref{sec:n-meaningful-estimation-method}, except that we use the simulated selector instead of the real selector.
Let $\parameterizedsimulator$ be the simulated selector constructed from the parameters $\nmeaningful$ and $\probability$, and $\mathcal{A}[f']$ be the algorithm for computing the $\nmeaningful$ value using the feature selector $f'$ (Section~\ref{sec:n-meaningful-estimation-method}).
If the simulated selector perfectly models the feature selector, it should satisfy the following fixed point equation:
\begin{equation}\label{eq:fixed-point-equation-for-n-meaningful}
    \nmeaningful = \mathcal{A}[\parameterizedsimulator].
\end{equation}
Therefore, by running the estimation algorithm $\mathcal{A}$ repeatedly, we can obtain the calibrated estimation of $\nmeaningful$ under certain conditions such that the iteration converges.

In the experiments below, we only run the algorithm once and confirm that Equation (\ref{eq:fixed-point-equation-for-n-meaningful}) holds, i.e., the initial estimation $\nmeaningful$ and the re-estimated value $\nmeaningfulverification=\mathcal{A}[\parameterizedsimulator]$ are identical (here, $\mathrm{v}$ in the notation means the \textit{verification} of $\nmeaningful$).

\subsection{Computational Complexity}\label{sec:computational-complexity}

In the total workflow of the proposed algorithm, the key measure of computational complexity is how many times we run the real selectors.
The algorithm executes the real selector $O(\nstability + \nensemble)$ times.
If we compute the stability of the ensemble selectors naively, we need to run the feature selector $O(\nstability \times \nensemble)$ times.
See Section~\ref{sec:computational-complexity-detail} in the supplementary for the derivation.

\section{Results}\label{sec:results} % Section 3
\begin{table}[t]
\centering
\caption{Dataset specification. \label{tab:dataset-specification}}
\resizebox{\columnwidth}{!}{
{\begin{tabular}{@{}lllll@{}}
\toprule
    Name & Feature Type & Label Type & Dim. & Sample Size \\
\midrule
    Colon & Discrete & Binary & 2000 & 62 \\
    Lymphoma & Discrete & Multi-class (9) & 4026 & 96 \\
    Prostate & Continuous & Binary & 5966 & 102 \\
\bottomrule
\end{tabular}}{}
}
\end{table}

\subsection{Experiment Settings}\label{sec:evaluation}\label{sec:implementation} 

To demonstrate the applicability of the proposed method with various data types, we conducted the experiments using three datasets of microarray gene expression data: Colon~\citep{ding2005minimum}, Lymphoma~\citep{ding2005minimum}, and Prostate~\citep{NIPS2010_09c6c378} datasets.
Table~\ref{tab:dataset-specification} shows the specifications of the datasets we used for the experiments.
These datasets differ in the type of feature vectors (discrete/continuous) and labels (binary/multi-class).
All datasets are included in the scikit-feature package~\citep{li2018feature}.

We used a trained random forest as a base feature selector.
The trained model gives an importance score to each feature.
We used another random forest as a predictor for evaluating the performance of the selected features.
We employed the mean of the rank in aggregating the results of the weak selectors as an ensemble algorithm as this is a common choice~\citep{10.1007/978-3-540-87481-2_21,he2010stable}.

We employed the pair-wise Jaccard similarity as the stability index.
Since all the datasets in this study are for classification problems, we used the accuracy as a $K$-classification task for the prediction performance when the dataset has $K$ label types.

See Sections~\ref{sec:evaluation-procedure-details}--\ref{sec:implementation-details} for the evaluation details.

In the main article, we mostly explain the results for Colon dataset.
We refer to Section~\ref{sec:additional-experiment-results} in the supplementary for the results of the other datasets (Lymphoma, Prostate).
We have additional discussions not included in the main article in Sections~\ref{supsec:discussion} and~\ref{supsec:future_direction} of the supplementary.

\subsection{Parameter Setting of \texorpdfstring{$\ntarget$}{}}

We set $\ntarget = 20, 40$, and $60$ for Colon, Lymphoma, and Prostate datasets, respectively,
and ran the ensemble selector consisting of random forest feature selectors.
Figure~\ref{fig:prediction_accuracy,n-redundant-estimation} (left) shows the prediction accuracy scores for Colon dataset, with various parameter settings.
Since, the classifier achieved high accuracy regardless of the number of trees in the random forest feature selector (referred to as $\ntrees$ hereafter), we use these $\ntarget$ values in the following experiments.

\subsection{Estimation of Parameter \texorpdfstring{$\nmeaningful$}{}}\label{sec:n-meaningful-estimation-result}

\subsubsection{Determination of Threshold \texorpdfstring{$\threshold$}{}}\label{sec:determine-t-random-max}

We first determine the threshold $\threshold$, which is needed to estimate $\nmeaningful$.
Figure~\ref{fig:prediction_accuracy,n-redundant-estimation} (right) compares the distribution of the frequency counts of the real and uniform selectors in a single estimation of $\threshold$ for Colon dataset (see Figure~\ref{supfig:n-redundant-estimation} for Lymphoma and Prostate datasets).
We set $\ntrees = 500$ for the random forest used as a feature selector.
When we used the uniform selector, the distribution was concentrated at low frequencies.
On the contrary, the distribution was more heavy-tailed when we used the real selector.
These distributions differed significantly as expected.
This implies that we can find the features that the real selectors prefer to choose.

\begin{figure}[!tpb]
    \centering
    \includegraphics[width=0.47\linewidth]{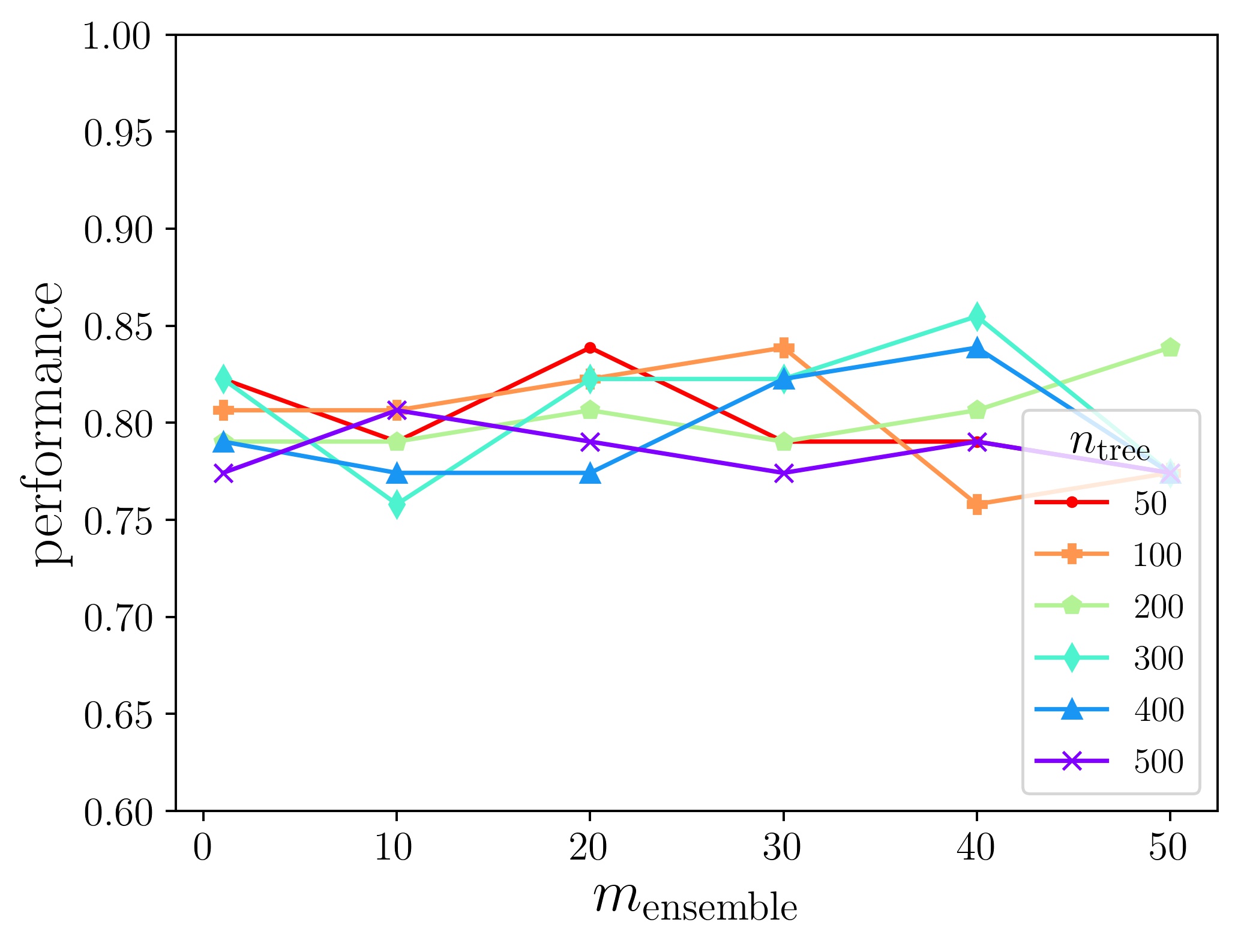}
    \includegraphics[width=0.47\linewidth]{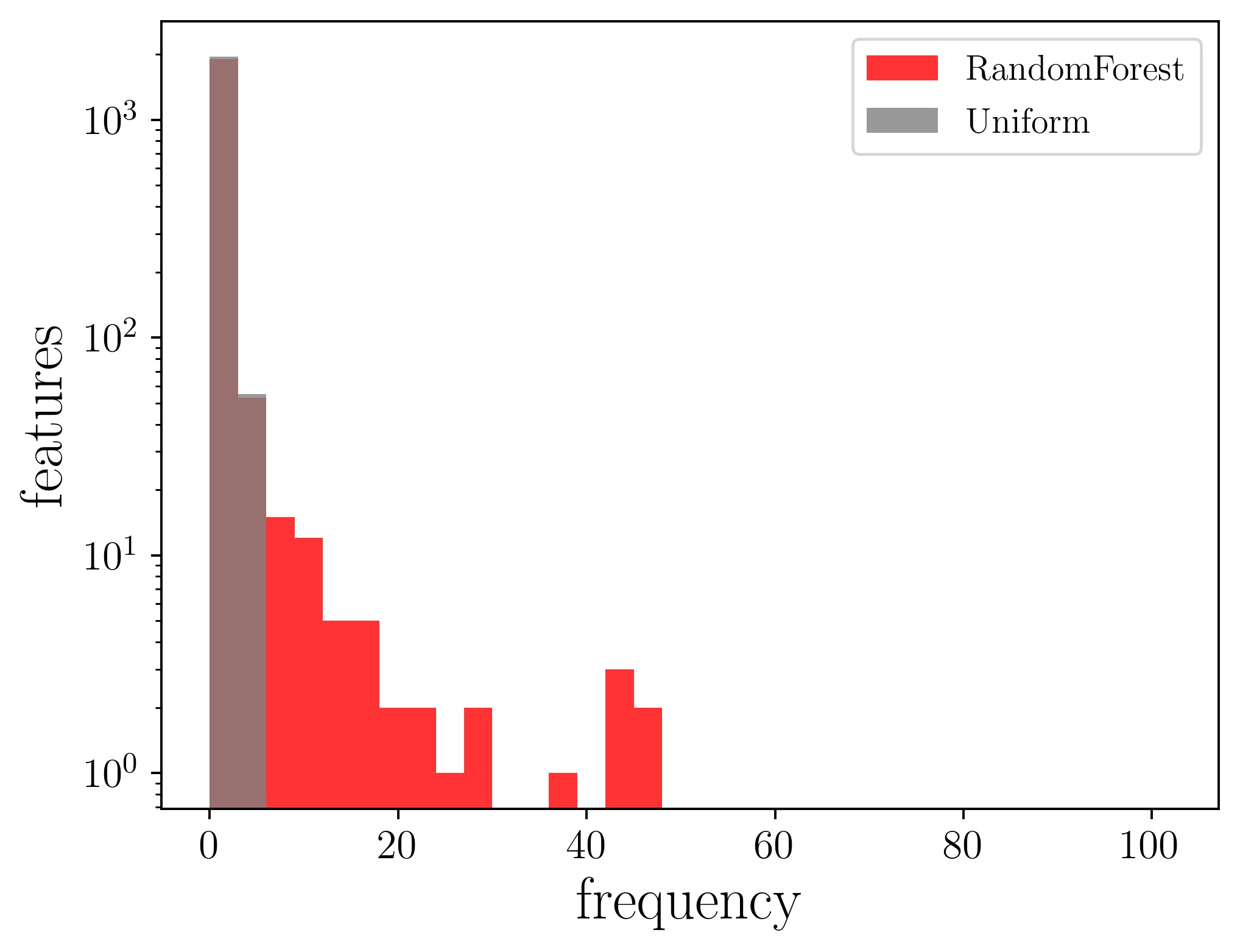}
    \caption{(Left) Prediction accuracy for Colon dataset. (Right) Histogram of features' frequencies selected by the random forest and the uniform feature selectors for Colon dataset.
    }\label{fig:prediction_accuracy,n-redundant-estimation}
\end{figure}

Recall that the threshold $\threshold$ is the maximum frequencies chosen by uniform selectors.
When we computed the threshold $1000$ times, the mean and standard deviation of the threshold was $\threshold = 4.640 \pm 0.636$ for Colon dataset. 
Since we estimated $\nmeaningful$ by the number of features that were selected more frequently than $\threshold$, the stability of $\threshold$ could affect the robust estimation of $\nmeaningful$.
As the standard deviation was relatively small,
we see that this method determined $\threshold$ in a stable manner.

\subsubsection{Effect of Feature Selector Models on Estimation of Parameter \texorpdfstring{$\nmeaningful$}{}}\label{sec:effect-of-base-models-on-parameter-estimation}

Table~\ref{tab:n-redundant-vs-n-trees} shows the estimated value of $\nmeaningful$ for various numbers of trees $\ntrees$ in a random forest feature selector.
We estimated the value of $\nmeaningful$ $1000$ times using the same values of $\threshold$ as those in Section~\ref{sec:determine-t-random-max} and computed its mean and standard deviation.

We observed that $\nmeaningful$ depended not only on the dataset but also on the complexity of the feature selector.
More specifically, in Prostate dataset, the estimated $\nmeaningful$ monotonically decreased as $\ntrees$ increased.
However, no trend of monotonic changes was observed in $\nmeaningful$ with respect to $\ntrees$ for Colon and Lymphoma datasets.
We discuss the relation of $\ntrees$ and $\nmeaningful$ in more detail in Section~\ref{supsec:effect-of-base-models-on-parameter-estimation} in the supplementary.

Based on this result, we set $\nmeaningful = 60, 150,$ and $200$ for Colon, Lymphoma, and Prostate datasets, respectively in the following experiments.
They correspond to the random forest feature selector with $300$ trees for Colon and Lymphoma datasets and with $100$ -- $200$ trees for Prostate dataset.

\begin{table}[!t]
\centering
\caption{Estimation of the parameter $\nmeaningful$.\label{tab:n-redundant-vs-n-trees}} {
\begin{tabular}{@{}lllll@{}}
\toprule
    $\ntrees$ & Colon & Lymphoma & Prostate \\
\midrule
    50 & 45.6 $\pm$ 7.1 & 140.0 $\pm$ 20.3 & 210.3 $\pm$ 16.8 & \\
    100 & 53.0 $\pm$ 8.2 & 140.4 $\pm$ 14.8 & 210.8 $\pm$ 15.9  & \\
    200 & 56.8 $\pm$ 6.0 & 158.3 $\pm$ 15.4 & 190.5 $\pm$ 12.1  & \\
    300 & 60.1 $\pm$ 6.1 & 153.2 $\pm$ 14.9 & 186.9 $\pm$ 12.7  & \\
    400 & 56.2 $\pm$ 7.2 & 155.4 $\pm$ 13.9 & 179.6 $\pm$ 10.2  & \\
    500 & 59.1 $\pm$ 6.6 & 154.5 $\pm$ 10.2 & 179.0 $\pm$ 10.8  & \\
\bottomrule
\end{tabular}}{}
\end{table}

\subsection{Estimation of Parameter \texorpdfstring{$\probability$}{} and Verification of Parameter \texorpdfstring{$\nmeaningful$}{}}\label{sec:p-estimation}

Figure~\ref{fig:n-redundant-reestimation,p-estimation} (left) shows the relationship between the initial estimate of $\nmeaningful$ and its corresponding re-estimated value $\nmeaningfulverification$.
These two quantities agreed when they are small.
However, $\nmeaningfulverification$ decreased when $\nmeaningful$ was large.
In addition, as the parameter $\probability$ became smaller, $\nmeaningfulverification$ started to deviate from $\nmeaningful$ at a smaller value.
This fact agrees with the qualitative discussion in Section~\ref{sec:n-meaningful-verification}.

Figure~\ref{fig:n-redundant-reestimation,p-estimation} (right) plots the parameter pairs $\nmeaningful$ and $\probability$ that are compatible with the real selectors in terms of stability for Colon dataset.
More concretely, we first computed the stability values for the single real feature selector, which were $0.1$.
Next, we computed the stability of the simulated selector for various combinations of $\nmeaningful$ and $\probability$.
Then, we selected the parameters whose stability value was close to that of the real selector (blue triangles in Figure~\ref{fig:n-redundant-reestimation,p-estimation} (right)).
Similarly, we computed the $\nmeaningfulverification$ values and selected the parameters whose $\nmeaningfulverification$ was close to $\nmeaningful$ (orange circles in Figure~\ref{fig:n-redundant-reestimation,p-estimation} (right)).
The intersections of the blue triangles and the orange circles are the desired parameters in terms of the stability and estimation of $\nmeaningful$.

We estimated $\probability=0.7$ for Colon dataset from Figure~\ref{fig:n-redundant-reestimation,p-estimation} (right).
In addition, we observe that $\nmeaningful$ and $\nmeaningfulverification$ were both $60$, implying that the parameters $\probability$ and $\nmeaningful$ were compatible in the sense that they satisfied Equation~(\ref{eq:fixed-point-equation-for-n-meaningful}).
We discuss results for the other datasets (Lymphoma, Prostate) in detail in Section~\ref{sec:additional-experiment-results} in the supplementary.

\begin{figure}[t]
    \centering
    \includegraphics[width=0.60\linewidth]{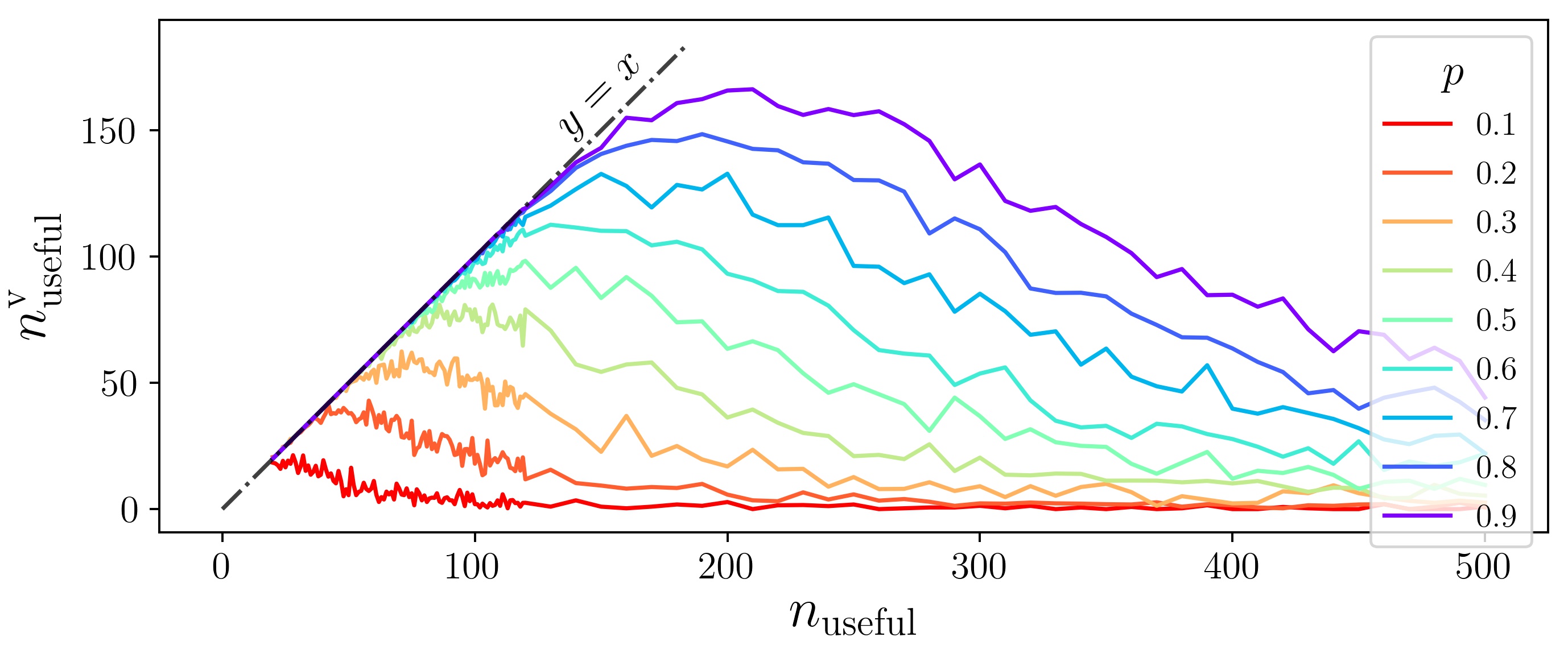}
    \includegraphics[width=0.35\linewidth]{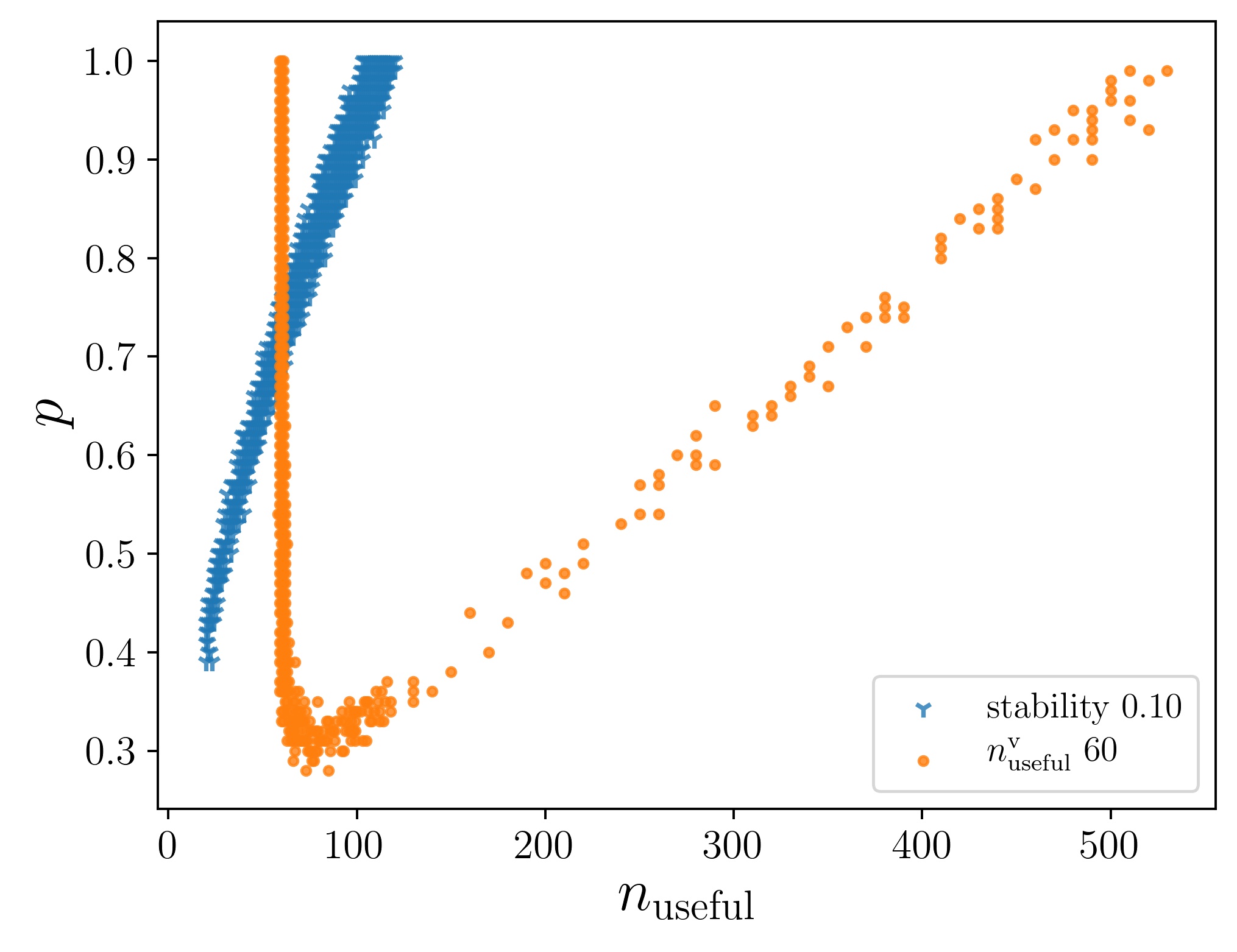}
    \caption{
    (Left) The verification value $\nmeaningfulverification$ of $\nmeaningful$ computed from the simulated selector constructed from the parameters $\probability$ and $\nmeaningful$.
    If the parameter $\nmeaningful$ satisfies $\nmeaningful = \nmeaningfulverification$ (the dashed line), it satisfies the fixed point equation~(\ref{eq:fixed-point-equation-for-n-meaningful}).
    (Right) Estimation of the parameter $\probability$. We used the configuration of Colon dataset. The blue triangles are parameter pairs $(\nmeaningful, \probability)$ such that the estimated stability is within $0.1\pm 0.01$. The orange circle points are parameter pairs whose $\nmeaningfulverification$ is within $60\pm 2$. $\nmeaningful$ ranges over $\nmeaningful=20,\ldots, 810$. $\probability$ ranges over $\probability=0.01, \ldots, 1.00$.
    }\label{fig:n-redundant-reestimation,p-estimation}
\end{figure}

\subsection{Stability Estimation of Ensemble Feature Selectors}\label{sec:stability-estimation-of-ensemble-efature-selectors}

Figure~\ref{fig:stability-estimation,stability-actual} (left) shows the stability of the simulated selectors constructed from $\nmeaningful$, which were estimated in Section~\ref{sec:n-meaningful-estimation-result} and verified in Section~\ref{sec:p-estimation}.
Although $\probability$ was already determined as a single value $0.7$, we plotted the estimated stability values for various different $\probability$ for later use.
Figure~\ref{fig:stability-estimation,stability-actual} (right) appears similar to Figure~\ref{fig:stability-estimation,stability-actual} (left), but was obtained from the real selector using various $\ntrees$.

We compared the graph of $\probability=0.7$ in Figure~\ref{fig:stability-estimation,stability-actual} (left), estimated in Section~\ref{sec:p-estimation}, and the graph of $\ntrees=300$ in Figure~\ref{fig:stability-estimation,stability-actual} (right), corresponding $\nmeaningful=60$ as estimated in Section~\ref{sec:effect-of-base-models-on-parameter-estimation}.
We observed that they behaved similarly, implying that the simulated selector with $\probability=0.7$ simulated effectively the stability of the real selectors with $\ntrees=300$.

Next, we analyzed how the simulation result changed as we changed $\probability$.
Figure~\ref{fig:stability-estimation,stability-actual} (left) verifies the monotonic relationship between $p$ and the stability value that we intuitively explain in Section~\ref{sec:p-estimation-method}.

Finally, we pay attention to the stability values when we ensemble many weak feature selectors.
When the number of weak selectors $\nensemble$ was greater than or equal to $30$, the estimated stability was stable at approximately $0.2$ regardless of $\probability$ (Figure~\ref{fig:stability-estimation,stability-actual}, left).
This stability value is close to that in Figure~\ref{fig:stability-estimation,stability-actual} (right) when $\ntrees$ was greater than or equal to $200$ and $\nensemble$ was greater than $40$.
On the contrary, differently from the simulation result, the actual stability values when $\ntrees$ is smaller than $200$ were lower than those when $\ntrees$ is larger than or equals to $200$ (Figure~\ref{fig:stability-estimation,stability-actual}, right).
We hypothesize that this inconsistency occurred because the estimated $\nmeaningful$ values were different between different $\ntrees$, as we discuss in Section~\ref{supsec:discussion} in the supplementary.

\subsection{Computational Time}\label{sec:computational-time}

We compared the computational time of the proposed stability estimation method with the stability calculation using the real selectors.
For a fair comparison, when we measured the time of the real selectors, we neither trained the random forest classifiers nor evaluated their prediction performance, as these were not performed for the simulated selectors either.
Note that we measured the time required to run the simulated weak selectors for the given parameters $\nmeaningful$ and $\probability$.
Therefore, we needed additional time for estimating the parameters, which is theoretically faster than the naive computation of the real selector (Section~\ref{sec:computational-complexity}).
A single CPU was used for the computation.
In particular, weak selectors were performed sequentially.

Figure~\ref{fig:computation-time} shows the elapsed time to calculate the stability for the real feature selectors and the stability estimation by simulation, respectively.
We used the parameters for Colon dataset.
See Figures~\ref{supfig:computation-time-real} and~\ref{supfig:computation-time-simulation} in the supplementary for the results for Lymphoma and Prostate datasets.
Note that the scales of the vertical axes are different between the results for the real and the simulated ensemble feature selectors.
This shows that our proposed method is much faster than the actual stability computation.
For the real selectors, the computation time was almost proportional to the number of weak selectors (Figure~\ref{fig:computation-time}, left).
By contrast, we observed a moderate increase in time for the case of simulated selectors when we add weak selectors (Figure~\ref{fig:computation-time}, right).

\begin{figure}[t]
    \centering
    \includegraphics[width=0.47\linewidth]{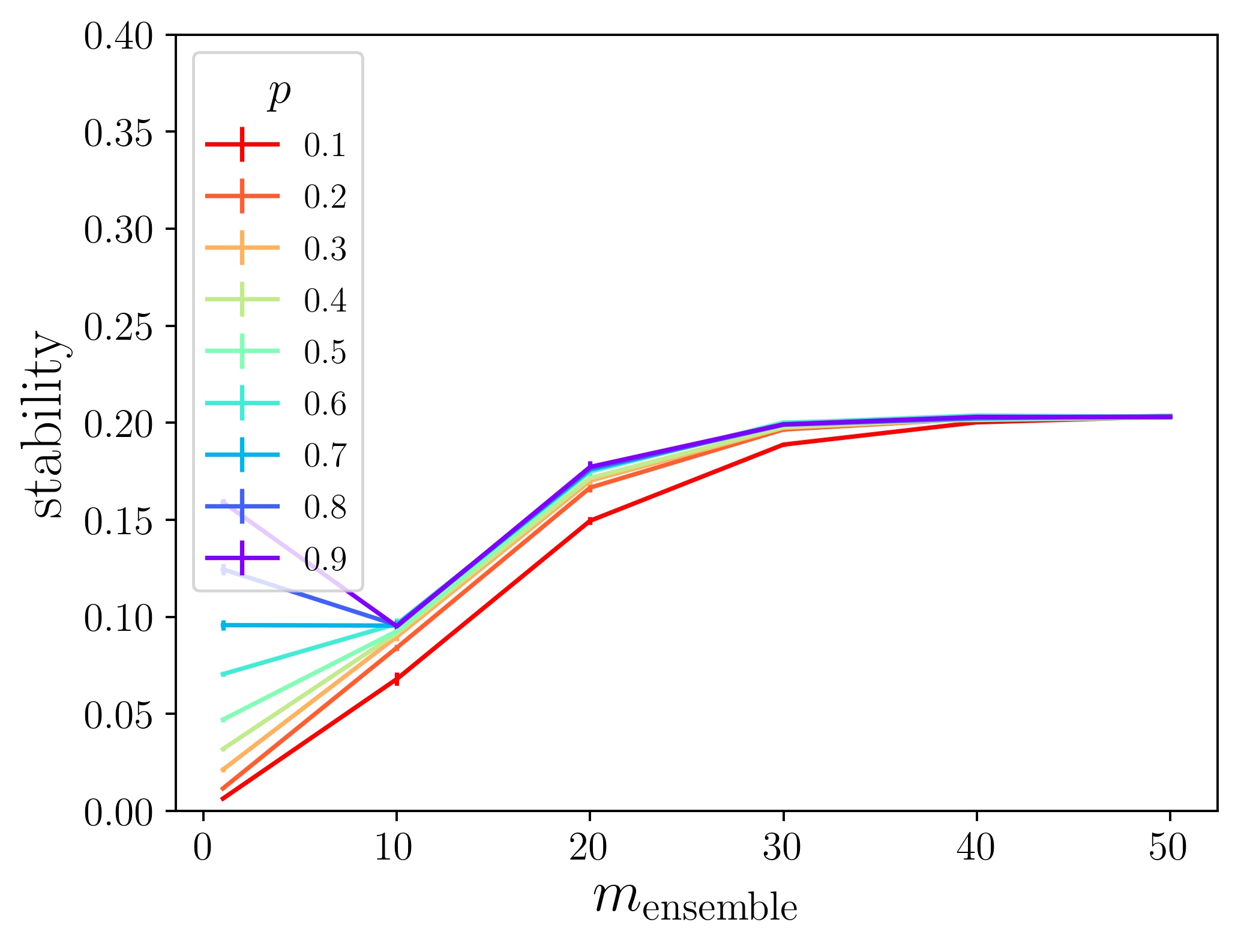}
    \includegraphics[width=0.47\linewidth]{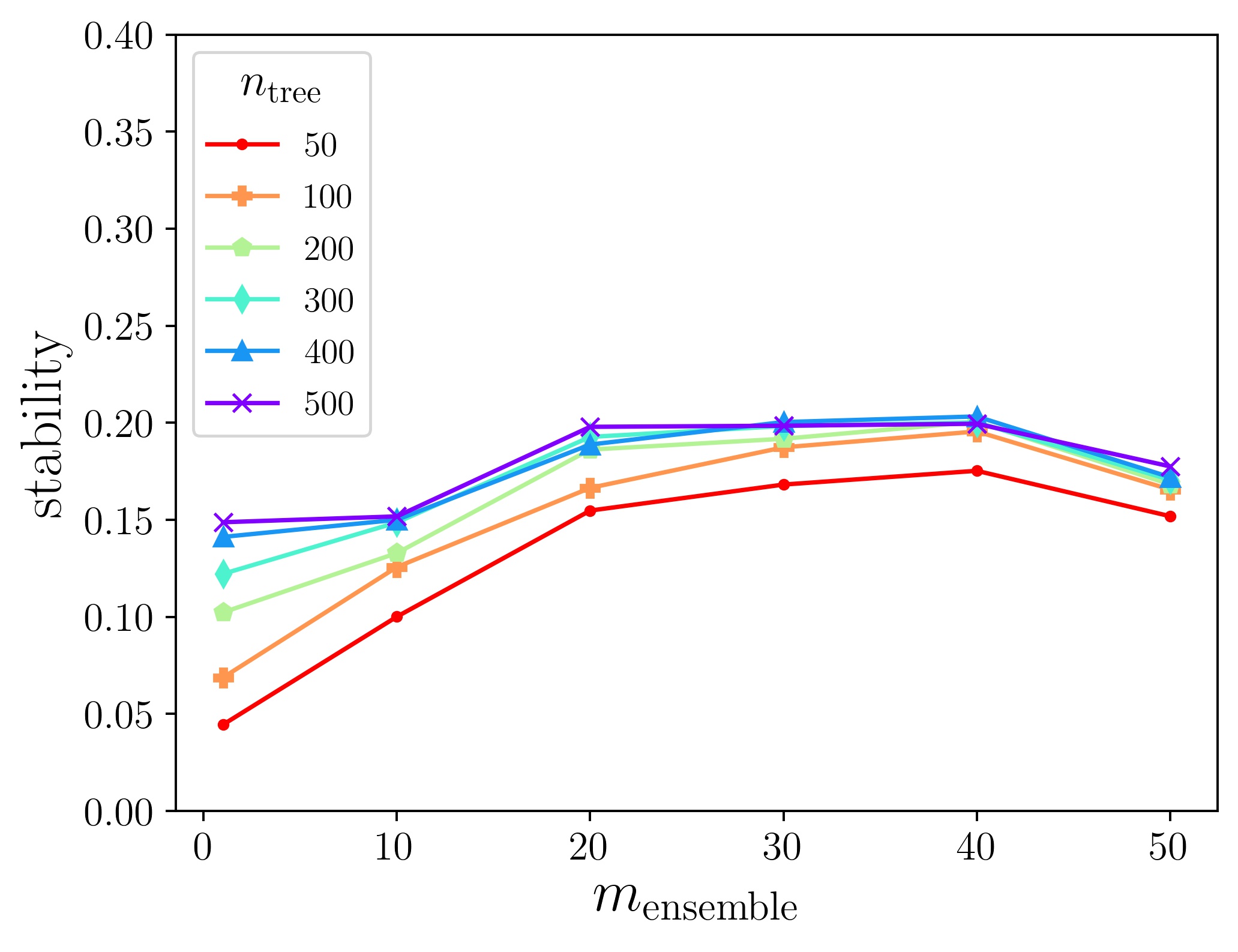}
    \caption{Estimation of stability of the simulated (left) and real ensemble selectors (right) for Colon dataset. 
    }\label{fig:stability-estimation,stability-actual}
\end{figure}

\begin{figure}[!tpb]
    \centering
    \includegraphics[width=0.47\linewidth]{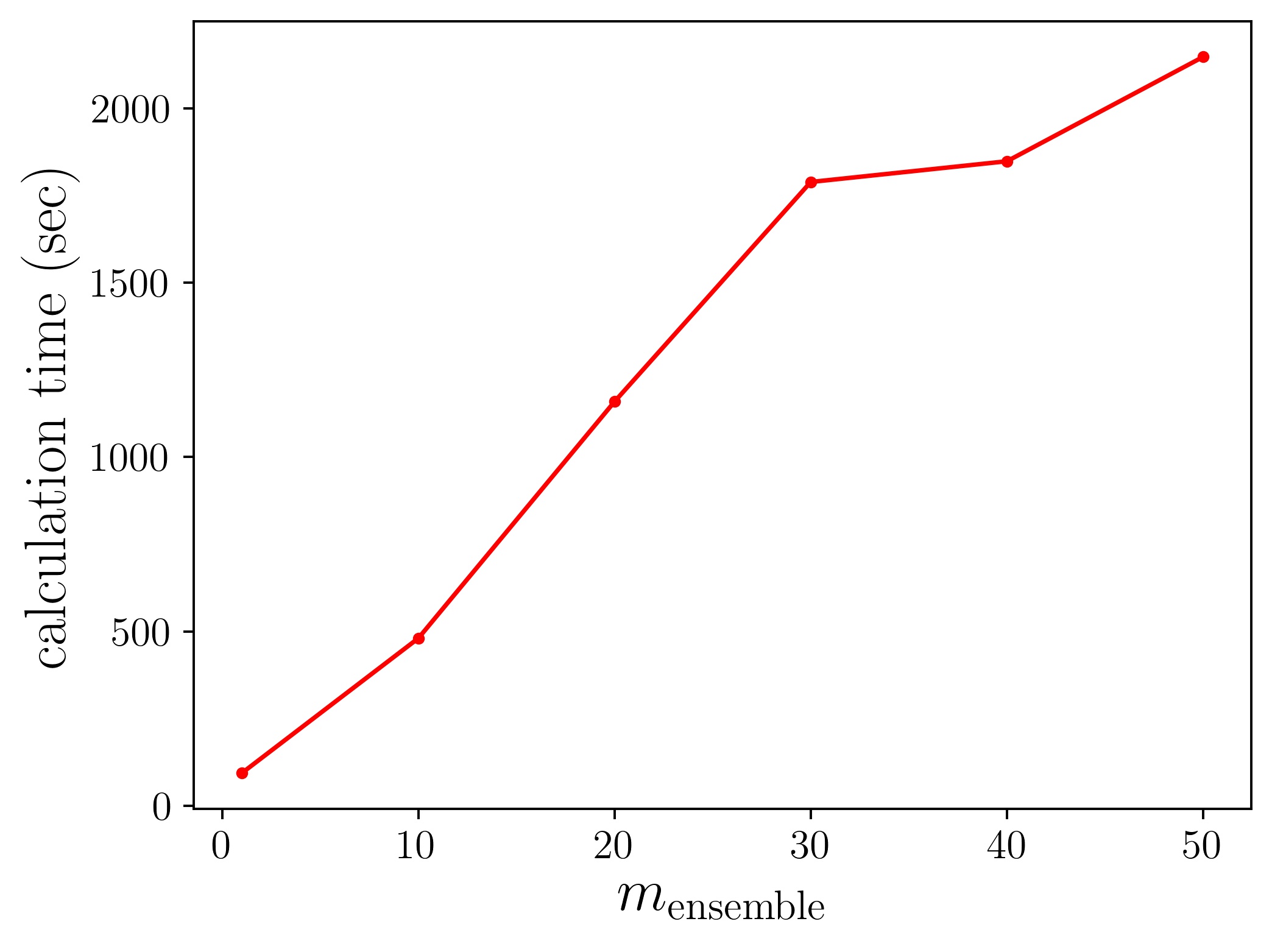}
    \includegraphics[width=0.47\linewidth]{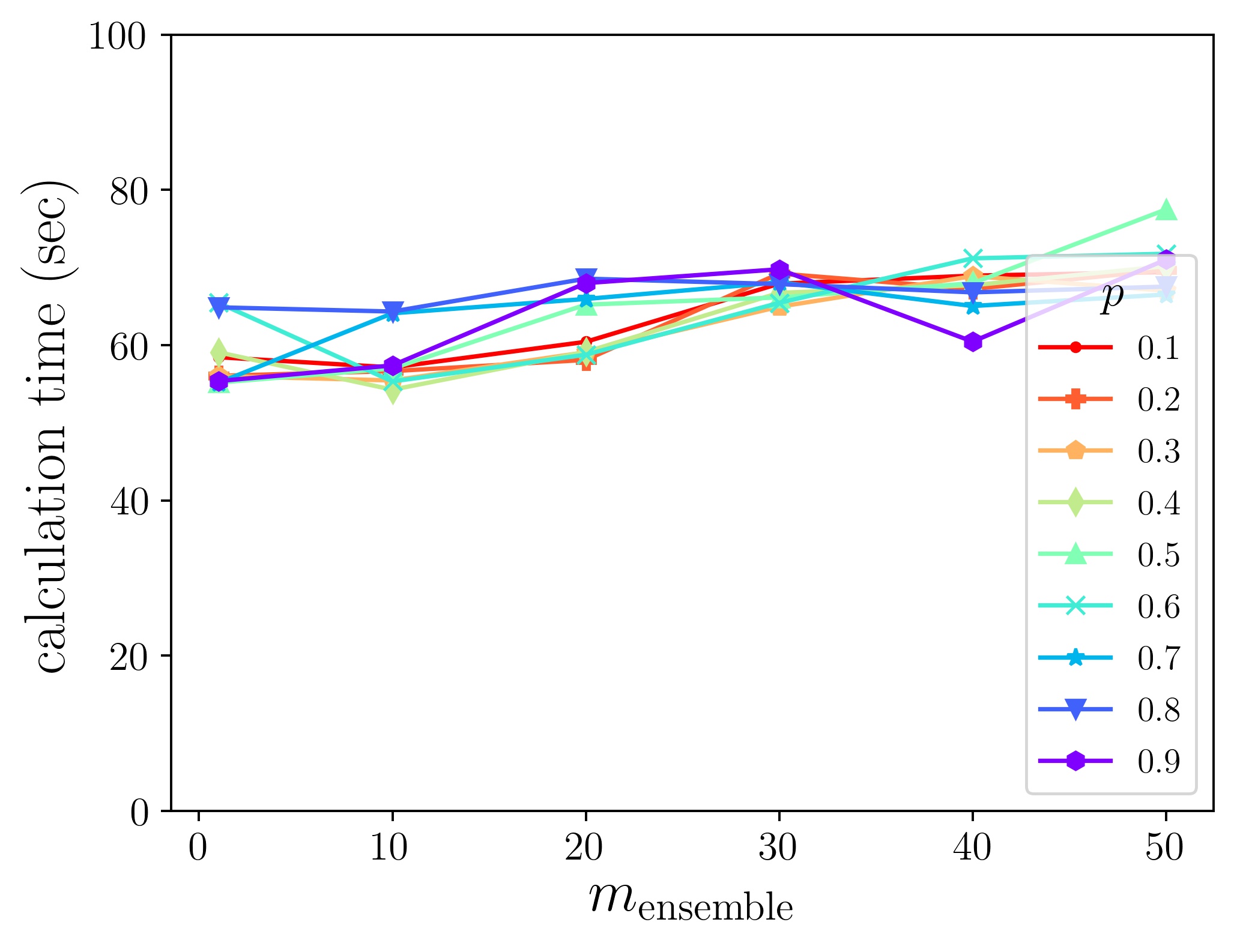}
    \caption{Computation time for Colon dataset using the real (left) and simulated (right) ensemble selectors. We used  $\ntrees=500$ as a real selector.
    }\label{fig:computation-time}
\end{figure}

\section{Conclusion}

In this paper, we proposed a fast estimation method for the stability of ensemble feature selectors.
The idea is to construct simulators which mimic weak selectors using two interpretable parameters and to ensemble simulated feature selectors instead of real ones.
Theoretically, the proposed method reduces the number of executions of the real feature selectors.
Using three cancer datasets, we demonstrated that the proposed method can accurately estimate the stability of the ensemble feature selectors with a small amount of computation.
Our method helps judge whether an ensemble feature selection algorithm is effective in terms of stability without actually run it many times.
Thus, it extends the applicability of ensemble feature selection.

\section*{Acknowledgments}
We would like to thank Editage (\url{www.editage.com}) for English language editing.

\bibliographystyle{natbib}
\bibliography{main}

\newpage
\appendix

\section{Notation Table}

Table~\ref{tab:notation-table} explains the notations we adopted throughout the paper.
Here, $|A|$ denotes the cardinality of the set $A$ (i.e., the number of elements in $A$).

\section{Additional Analyses}

\subsection{Proof of Theorem~\ref{thm:justification-of-n-meaningful-estimation}}\label{sec:proof-of-theorem1}

In this section, we give a proof for Theorem~\ref{thm:justification-of-n-meaningful-estimation}, which gives an intuitive justification of the estimation method for the parameter $\nmeaningful$ that we explained in Section~\ref{sec:n-meaningful-estimation-method}.
    \begin{proof}[Proof of Theorem~\ref{thm:justification-of-n-meaningful-estimation}]
        For notational simplicity, we denote $\nf = \nfeatures$, $\nm = \nmeaningful$, and $\nt = \ntarget$, respectively.
        Taking a feature $x$ in $S'$.
        We find that the probability that $x$ is in $S_0$ is $\frac{\nt}{\nm}$.
        Therefore, the probability $p_0$ that the feature $x$ is selected is
        \begin{align*}
        p_0 &=\frac{\nt}{\nm} \times p \times \frac{1}{\nt} + \left(1 - \frac{\nt}{\nm}\right) \times (1-p) \times \frac{1}{\nf - \nt}\\
            &=\frac{(\nf - \nm)p + (\nm - \nt)}{\nm(\nf - \nt)}.
        \end{align*}
        The probability that the uniform feature selector selects $x$ is $p_1 = \frac{1}{\nf}$.
        By direct calculation, we have
        \begin{align*}
            p_0 - p_1 &= \frac{\nf-\nm}{\nm \nf (\nf-\nt)}(\nf p - \nt).
        \end{align*}
        Therefore, $p_0 > p_1$ if and only if $p > \frac{\nt}{\nf}$.
    \end{proof}
Let us consider the situation where we run the same number of the base feature selectors and uniform feature selectors, and count the number of times each feature is top-ranked.
Suppose that the simulated feature selector can perfectly simulate the base feature selector and that the parameter $\probability$ is large.
Then, Theorem~\ref{thm:justification-of-n-meaningful-estimation} suggests that, if we run a sufficient number of feature selectors, the base feature selectors select features in $S'$ more often than the uniform feature selectors do with high probability.

\subsection{Computational Complexity}\label{sec:computational-complexity-detail}

We review the whole process and count the total number of executions.
The algorithm runs the real feature selectors $\nensemble$ times to estimate $\nmeaningful$ (Section~\ref{sec:n-meaningful-estimation-method}).
In the estimation of $\probability$, the algorithm compares the stability of the real and simulated feature selectors (Section~\ref{sec:p-estimation-method}).
To do so, the algorithm runs the real feature selector $\nstability$ times to compute the stability of the real feature selector, and run the simulated feature selector $\nstability \times \nprobability$ times to compute the stability values of the simulated feature selector.
Here, $\nprobability$ is the number of possible values that the parameter $\probability$ can take.
In our setting, since we used $\probability = 0.1, \ldots, 0.9$, we have $\nprobability =9$.
For the verification of $\nmeaningful$, we do not have to run the real feature selectors (Section~\ref{sec:n-meaningful-verification}).
Similarly, in the computation of the stability of the simulated ensemble feature selectors, the algorithm does not run the real feature selector and only runs the simulated feature selector $\nmeaningful \times \nstability$ times.

In summary, the algorithm executes the real feature selector $O(\nstability + \nensemble)$ times.
If we compute the stability of the ensemble feature selectors naively, we need to run the feature selector $O(\nstability \times \nensemble)$ times.
Therefore, we can significantly reduce the computational time.
Although the algorithm runs the simulated feature selectors $O(\nstability \times (\nprobability + \nensemble))$ times, because the simulation is faster than the actual feature selector, this overhead is comparably small, as can be seen in Section~\ref{sec:computational-time}.

\section{Additional Experiment Settings}

\subsection{Evaluation Procedure}\label{sec:evaluation-procedure-details}

Figure~\ref{supfig:evaluation-overview} is a schematic diagram of how to evaluate the stability and the prediction performance of a real feature selector.
We split a whole dataset into a training dataset for feature selection (\textit{training dataset 1}), a training dataset for a predictor (\textit{training dataset 2}), and a test dataset for evaluating the predictor (\textit{test dataset}).
Because the sample size of the datasets was small, we employed the leave-one-out cross-validation to evaluate the predictor in this study.
That is, the sample size of the test dataset was $1$.
We split the remaining data into two training datasets of approximately equal size.

We employed the pair-wise Jaccard similarity as the index of the stability. 
The pairwise Jaccard similarity $J$ of a feature selector is defined as follows:
\begin{equation*}
    J = \frac{2}{U(U-1)} \sum_{i=1}^{U} \sum_{j=i+1}^{U} \frac{|s_i \cap s_j|}{|s_i \cup s_j|}.
\end{equation*}
Here, $U=\nstability$ is the number of copies of a feature selector used for calculating its stability and $s_i \subset S$ is the feature selection result of the $i$-th copy of the feature selector.
Because all the datasets we dealt with in this study are classification problems, we used the accuracy as a $K$-classification task for the prediction performance when the dataset has $K$ label types.
If we use simulated feature selectors, we skip the training and evaluation of the predictor because they do not output the feature selection results.

\begin{figure}
    \centering
    \includegraphics[width=0.9\linewidth]{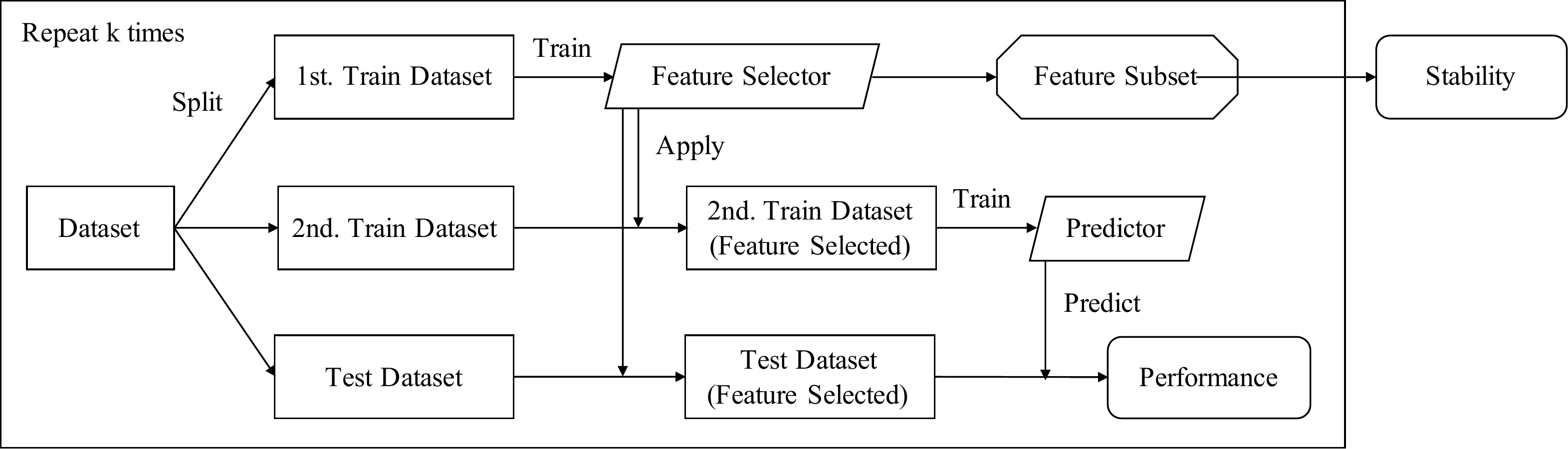}
    \caption{Evaluation overview.}\label{supfig:evaluation-overview}
\end{figure}

\subsection{Dataset}\label{sec:dataset-specification-details}

The followings are the details of the datasets we used in the experiments in the main article.

\subsubsection{Colon}

Colon dataset consists of discrete feature vectors and binary labels.
The sample consists of 22 tumor and 40 normal cells.
The feature vectors are 2000-dimensional, and their elements are discretized into three values $\{-2, 0, 2\}$.
The source of the dataset is~\citep{alon1999broad}.

\subsubsection{Lymphoma}

Lymphoma dataset consists of 96 data points, each with a discrete feature vector and a multi-class label.
The feature vector is 4026-dimensional.
Like those in Colon dataset, the feature vectors are discretized into three values.
Each label takes one of nine classes representing cancer sub-types.
The source of the dataset is~\citep{alizadeh2000distinct}.

\subsubsection{Prostate}

Prostate dataset consists of a continuous feature vector as continuous values and a binary label.
The sample has 102 data points consisting of 52 tumor and 50 normal cells, respectively.
Each feature is a 5966-dimensional continuous vector.
The source of the dataset is~\citep{singh2002gene}.

\subsection{Implementation Details}\label{sec:implementation-details}

We used Python 3 for all implementations of the models used in this study.
Feature selectors and machine learning models are based on the implementation of the scikit-learn library~\citep{scikit-learn}.
Specifically, we used the \texttt{RandomForestClassifier} module as an implementation of the random forest model.
We used the Gini index as a function for measuring the quality of a split in the algorithm.
We computed the importance score from the \texttt{feature\_importance\_} attribute of the \texttt{RandomForestClassifier} module.

\section{Additional Experiment Results} \label{sec:additional-experiment-results}

\subsection{Parameter Settings of \texorpdfstring{$\ntarget$}{}}

Figure~\ref{supfig:prediction_accuracy} shows the values of the prediction accuracy when the ensemble feature extractor is applied to Colon, Lymphoma, and Prostate datasets.
We chose $\ntarget = 20, 40, 60$ for Colon, Lymphoma, and Prostate datasets as the number of selected features $\ntarget$, respectively.
We confirmed that the classifier achieved high accuracy regardless of the number of trees $\ntrees$ in the random forest classifier.

\subsection{Estimation of the Parameter \texorpdfstring{$\nmeaningful$}{}}

Figure~\ref{supfig:n-redundant-estimation} compares the distribution of the frequency counts of the real and uniform feature selectors in a single estimation of $\nmeaningful$ for Colon, Lymphoma, and Prostate datasets.

Table~\ref{tab:threshold-by-random-feature-serlector} shows the mean and standard deviation of the threshold $\threshold$ computed $1000$ times for Colon, Lymphoma, and Prostate datasets.

\subsection{Estimation of the Parameter \texorpdfstring{$\probability$}{} and Verification of the Parameter \texorpdfstring{$\nmeaningful$}{}}

Figure~\ref{supfig:p-estimation} plots the parameter pairs $\probability$ and $\nmeaningful$ that are compatible with the real feature selectors in terms of stability for Colon, Lymphoma, and Prostate datasets.
Figure~\ref{supfig:stability-estimation} is the estimation of stability of the simulated ensemble feature selectors for these datasets.
Figure~\ref{supfig:stability-actual} is the actual stability value of the real ensemble feature selectors for these datasets.
Since we discussed the results of Colon dataset in the main article, we address the other two datasets in this section.

The stability value of the real feature selector, which is needed for the parameter estimation, was $0.1$ for Lymphoma and $0.2$ for Prostate datasets, respectively.

\subsubsection{Lymphoma Dataset}

We estimated from Figure~\ref{supfig:p-estimation} (middle) that $\probability=0.8$ and confirmed that the parameters $\probability$ and $\nmeaningful$ are compatible in the sense that $\nmeaningful$ satisfies the fixed point equation (\ref{eq:fixed-point-equation-for-n-meaningful}).
Figure~\ref{supfig:stability-estimation} (middle) shows the estimated stability of the simulated ensemble feature selection.
Figure~\ref{supfig:stability-actual} (middle) shows the values of the stability when the real ensemble feature selector was applied to Lymphoma dataset.

\subsubsection{Prostate Dataset}

We compared the simulated ensemble feature selectors in Figure~\ref{supfig:stability-estimation} (bottom) and the real ones in Figure~\ref{supfig:stability-actual} (bottom).
Similarly to the case of Colon dataset, both feature selectors increased the stability values as we increased the number of weak feature selectors, except for the simulation cases where $\nensemble=1$ and $\probability$ is greater than or equal to $0.7$.
However, the stability values differed when we ensembled a sufficient number of weak feature selectors: approximately $0.17$ for the simulation and $0.3$ for the real feature selector when $\ntrees=100$ and $200$ (corresponding to the estimated $\nmeaningful=200$, as estimated in Section~\ref{sec:effect-of-base-models-on-parameter-estimation}).

We hypothesize that this inconsistency in stability values was due to the failure in estimating $\nmeaningful$ in Section~\ref{sec:p-estimation}.
Figure~\ref{supfig:prostate-results-with-calibrated-n-redundant} is the stability estimation results for the Prostate dataset.
Differently from Figures~\ref{supfig:p-estimation} and~\ref{supfig:stability-estimation}, $\nmeaningful$ is changed from $200$ to $130$ in Figure~\ref{supfig:prostate-results-with-calibrated-n-redundant}.
We observe from Figure~\ref{supfig:prostate-results-with-calibrated-n-redundant} (top) that the equation $\nmeaningful=\nmeaningfulverification$ holds.
This implies that $\nmeaningful=130$ satisfies the fixed point equation (\ref{eq:fixed-point-equation-for-n-meaningful}).
Figure~\ref{supfig:prostate-results-with-calibrated-n-redundant} (bottom) is the stability estimation of the simulated ensemble feature selectors using the corrected value $\nmeaningful=130$.
The simulated stability value for a sufficiently large $\nensemble$ value was $0.3$ in this configuration.
This value is close to that for the real ensemble feature selectors we observed in Figure~\ref{supfig:stability-estimation} (bottom).
From this analysis, we conclude that the inconsistency can be fixed by appropriately finding $\nmeaningful$ such that Equation (\ref{eq:fixed-point-equation-for-n-meaningful}) holds.

\subsection{Computational Time}
Figure~\ref{supfig:computation-time-real} shows the computation time of the real ensemble feature selector for Colon, Lymphoma, and Prostate datasets.
Figure~\ref{supfig:computation-time-simulation} shows the computation time of the simulated ensemble feature selector for the same datasets.

\section{Discussion} \label{supsec:discussion}

\subsection{Effect of the Base Models on Parameter Estimation}\label{supsec:effect-of-base-models-on-parameter-estimation}

We investigated how the hyperparameters of the base feature selectors changed the estimation of the simulation parameters $\nmeaningful$ and $\probability$.
Among the various hyperparameters of random forest feature selectors, we considered \textit{mtry}, which is the number of features used for separating a node in a tree.
We paid attention to the \textit{mtry} parameter because it is known to critically affect the performance of random forest predictors~\citep{probst2019hyperparameters}.
Because we evaluated the effect of \textit{mtry} using several datasets with different feature dimensions, we employed the \textit{normalized} \textit{mtry}, which we define as the \textit{mtry} value divided by the feature dimension, as a hyperparameter.
In the implementation of the scikit-learn library, we can configure the normalized \textit{mtry} using the \texttt{max\_features} option.
Previous studies have shown that the square root of the feature dimension is a preferable choice as \textit{mtry} (hence, the normalized \textit{mtry} is the inverse square root of the feature dimension)~\citep{10.1007/978-3-642-02326-2_18}.
Our analysis in Section~\ref{sec:results} also employed this value.

Figure~\ref{supfig:parameter-estimation-with-various-max-features} shows the estimation of the parameters $\nmeaningful$ and $\probability$ for different normalized \textit{mtry} values.
On one hand, when the normalized \textit{mtry} was small ($0.01$), both $\nmeaningful$ and $\probability$ changed as we changed the number of trees $\ntrees$ (Figure~\ref{supfig:parameter-estimation-with-various-max-features}, top).
On the other hand, when the normalized \textit{mtry} was relatively large ($0.1$), the estimated $\nmeaningful$ was approximately $55$ regardless of the number of trees $\ntrees$ (Figure~\ref{supfig:parameter-estimation-with-various-max-features}, bottom).

In the setting of Table~\ref{tab:n-redundant-vs-n-trees}, the normalized \textit{mtry} was small.
Specifically, the normalized \textit{mtry} values were $1/\sqrt{2000}\approx 0.022$, $1/\sqrt{4026}\approx 0.015$, and $1/\sqrt{5966}\approx 0.012$ for Colon, Lymphoma, and Prostate datasets, respectively.
Therefore, the situations were similar to those in Figure~\ref{supfig:parameter-estimation-with-various-max-features} (top).
Accordingly, the estimated $\nmeaningful$ changed as we changed $\ntrees$ in Table~\ref{tab:n-redundant-vs-n-trees}. 

From these observations, we expect that $\nmeaningful$ is highly dependent on the complexity of the feature selectors in the small-normalized-\textit{mtry} regime.

\subsection{Simulation with Almost Constant \texorpdfstring{$\nmeaningful$}{} Value}\label{sec:constant-n-meaningful}

Let us first recap our analyses so far.
On one hand, the stability value of the simulated ensemble feature selector is almost independent of the number of trees $\ntrees$ of the random forest feature selector when the number of weak feature selectors $\nensemble$ is large (Figure~\ref{fig:stability-estimation,stability-actual} (left) in Section~\ref{sec:stability-estimation-of-ensemble-efature-selectors}).
However, it did not occur in the real ensemble feature selectors when the normalized \textit{mtry} was small (Figure~\ref{fig:stability-estimation,stability-actual} (right) in Section~\ref{sec:stability-estimation-of-ensemble-efature-selectors}).
On the other hand, the estimated $\nmeaningful$ significantly depends on $\ntrees$ when the normalized \textit{mtry} is small and vice versa (Figure~\ref{supfig:parameter-estimation-with-various-max-features}).
Considering these two observations, we hypothesize that the discrepancy in behaviors of the real and simulation ensemble feature selectors observed in Figures~\ref{supfig:stability-estimation} and~\ref{supfig:stability-actual} is caused by the difference in the
$\nmeaningful$ value for different $\ntrees$ values (Table~\ref{tab:n-redundant-vs-n-trees}).

To validate this hypothesis, we computed the stability
in the large-normalized-\textit{mtry} regime.
Figure~\ref{supfig:stability-actual-with-large-max-features} shows the stability of the real ensemble feature selectors, with the configuration being the same as that in Figure~\ref{supfig:stability-actual} except that the normalized \textit{mtry} is set to $0.1$ instead of the inverse squared root of the feature dimension.
We observe that the estimated stability value for the large $\nensemble$ is almost the same for all $\ntrees$ values as expected.
This behavior is in contrast to what we observed in Section~\ref{sec:stability-estimation-of-ensemble-efature-selectors} (more specifically, Table~\ref{tab:n-redundant-vs-n-trees} and Figure~\ref{supfig:stability-actual}), where both the estimated $\nmeaningful$ and the stability value of the real ensemble feature selector change as we increase the number of trees $\ntrees$.
These observations support the aforementioned hypothesis.

Next, we focus on the relationship between the parameter $\probability$ and the stability value when $\nmeaningful$ is an almost constant value.
From Figure~\ref{supfig:parameter-estimation-with-various-max-features} (bottom), the parameter $\probability$ is estimated as $\probability= 0.55, 0.65, 0.8, 0.8, 0.8$ for $\ntrees =50, 100, 200, 300, 400, 500$ for Colon dataset, respectively.
In particular, the estimated $p$ value is the same for $\ntrees\geq 200$.
Correspondingly, the stability values for $\ntrees=200$ to $500$ behaves similarly, except that the stability of $\ntrees=500$ is higher than the other $\ntrees$ when $\nensemble=30$ (Figure~\ref{supfig:parameter-estimation-with-various-max-features}, top).
It implies that the parameter $\probability$ explains the behavior of actual ensemble feature selector well.

In conclusion, the simulator accurately reflects the behavior of the stability value of the ensemble feature selectors, using the simulator parameters $\nmeaningful$ and $\probability$.

\section{Limitations and Future Directions} \label{supsec:future_direction}

\subsection{Limitations}

The simulated weak feature selector enables us to estimate the stability of both single and ensemble feature selectors efficiently, as well as to provide a way to treat various feature selector algorithms in a unified manner.
However, it only models how many features a feature selector specifically chooses, and does not
consider which features are preferably chosen by a feature selector.
Therefore, the simulated feature selector does not provide the feature selection results.

Algorithm~\ref{alg:simulated-feature-selector} selects one feature per iteration until it ranks all the features.
However, Theorem~\ref{thm:justification-of-n-meaningful-estimation} only considers its first iteration.
It is assumed that calculating only the first iteration is sufficient because the high-ranked features determine most of the ensemble result.
It would be a good future direction to extend Theorem~\ref{thm:justification-of-n-meaningful-estimation} to the late stage of the algorithm.

Our proposed stability estimation is agnostic to the feature selection algorithm.
However, our experiments only used the random forest feature selector, which is standard in current bioinformatics research.
It is another good future work to investigate the efficacy of the proposed method in other feature selection algorithms such as the Hilbert-Schmidt independence criterion (HSIC) Lasso~\citep{yamada2014high} and other importance score-based selection algorithms such as the permutation importance and the Shapley additive explanations (SHAP)~\citep{NIPS2017_8a20a862}.

\subsection{Future Extensions}

As explained in Section~\ref{sec:model-overview}, we assumed that the ensemble algorithm requires weak feature selectors to rank the features.
Instead, we can extend the proposed method to the case where the ensemble algorithm requires weak feature selectors to output a subset of features as the feature selection results. 
Specifically, we should modify the simulation of $f_m$ to pass $\ntarget$ features with the highest ranks to the ensemble algorithm. 

In this study, we performed a simulated ensemble for all candidate $\probability$ values (and for all $\nmeaningful$ if we verify it) and adopted it as the value of $\probability$ that is closest to the measured value of stability.
However, as we have seen in Section~\ref{sec:p-estimation} and Section~\ref{sec:stability-estimation-of-ensemble-efature-selectors}, the value of stability is a decreasing function with respect to $\probability$ when $\nmeaningful$ is constant.
Therefore, we can estimate the parameter $\probability$ using a binary search.
This reduces the time complexity of the $\probability$ estimation from $O(\nprobability)$ to $O(\log \nprobability)$, where $\nprobability$ is the number of possible values that $\probability$ takes.

\begin{figure}
    \centering
    \includegraphics[width=0.94\linewidth]{figure/prediction_accuracy/colon.jpg}\\
    \includegraphics[width=0.94\linewidth]{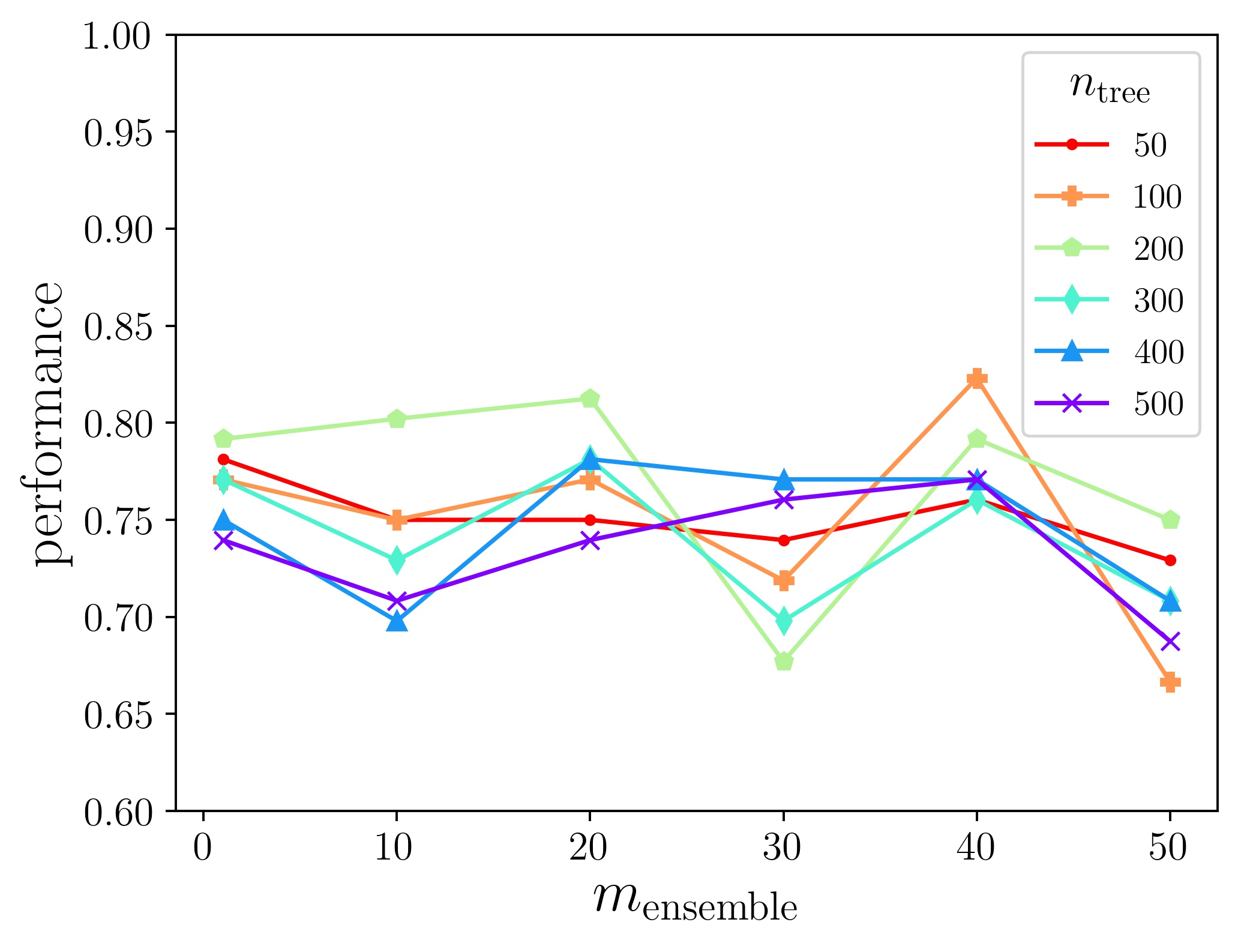}
    \includegraphics[width=0.94\linewidth]{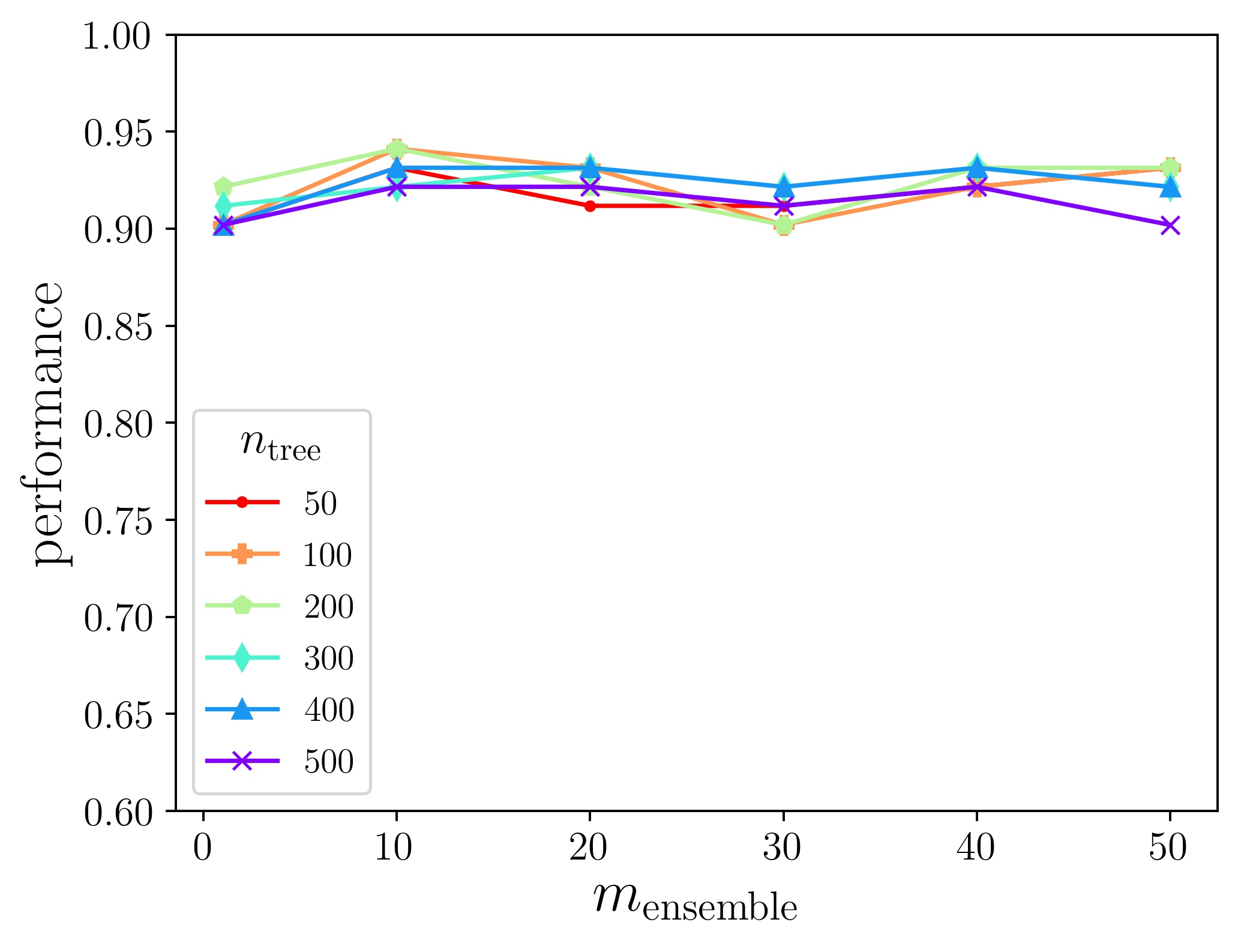}
    \caption{Prediction accuracy for Colon (top), Lymphoma (middle), and Prostate (bottom) datasets.
    }\label{supfig:prediction_accuracy}
\end{figure}
\begin{figure}
    \centering
    \includegraphics[width=0.94\linewidth]{figure/n_redundant_estimation/colon.jpg}\\
    \includegraphics[width=0.94\linewidth]{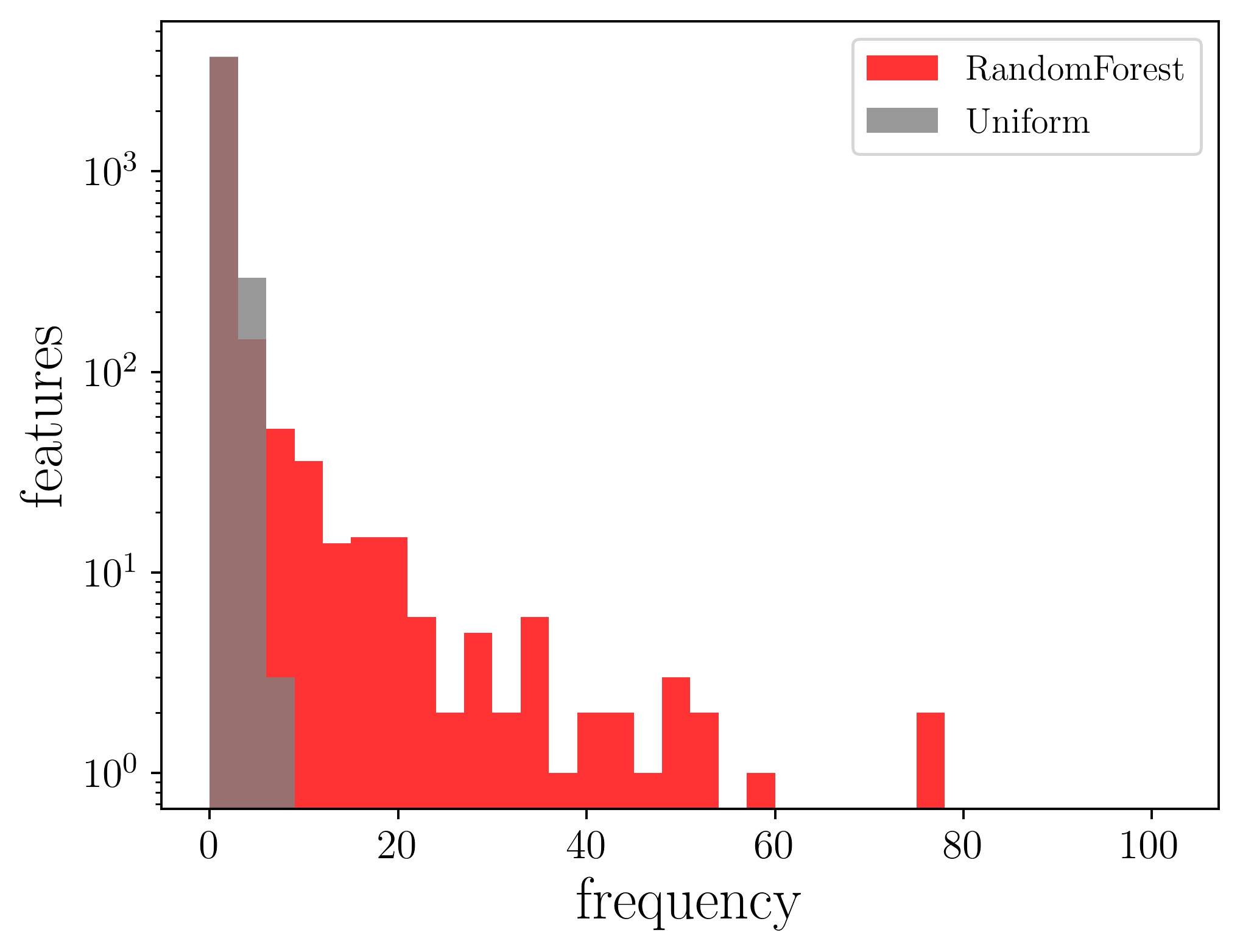}
    \includegraphics[width=0.94\linewidth]{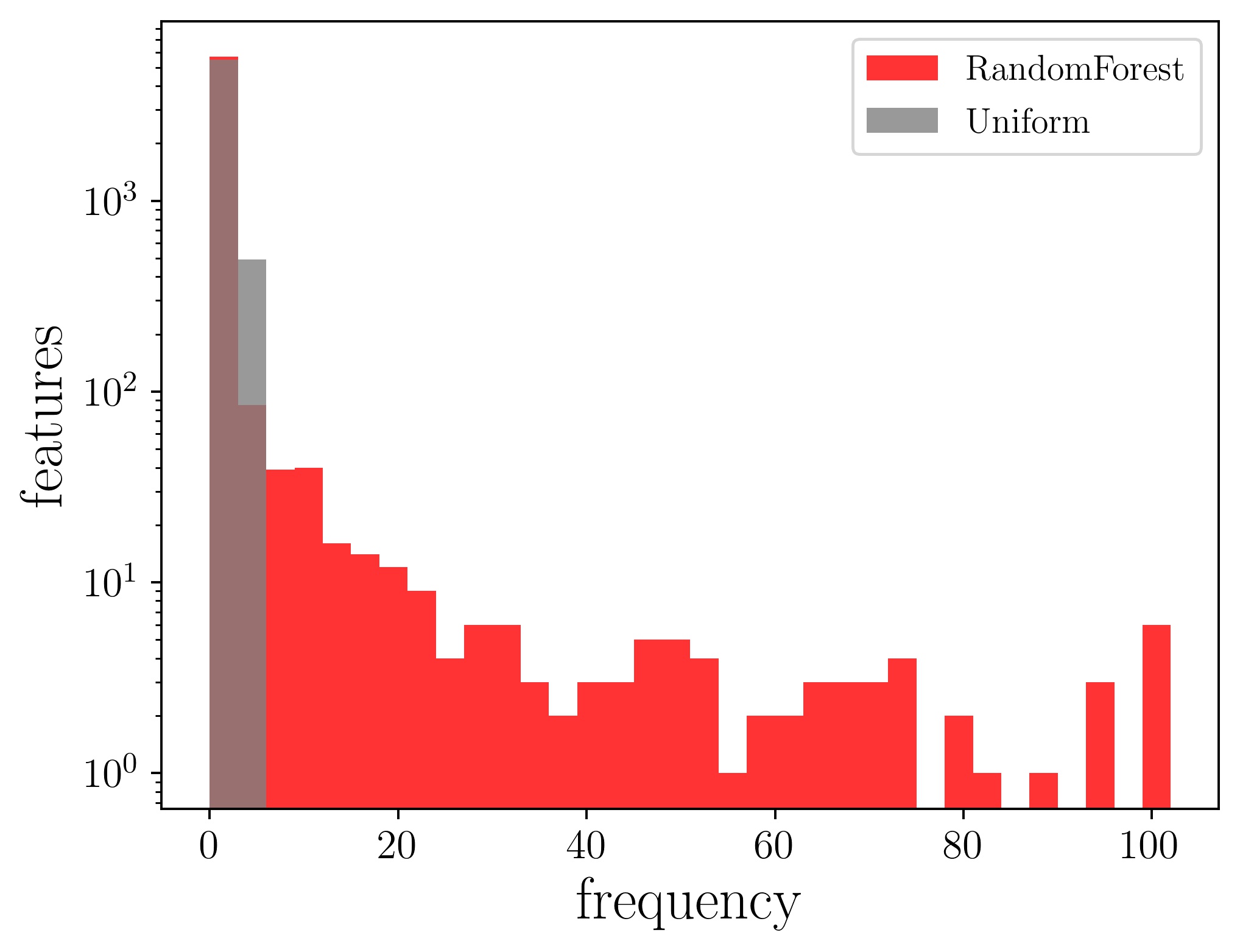}
    \caption{
    Histogram of features' frequencies selected by the random forest feature selector and the uniform feature selector for Colon (top), Lymphoma (middle), and Prostate (bottom) datasets.
    }\label{supfig:n-redundant-estimation}
\end{figure}
\begin{figure}
    \centering
    \includegraphics[width=0.94\linewidth]{figure/p_estimation/colon.jpg}\\
    \includegraphics[width=0.94\linewidth]{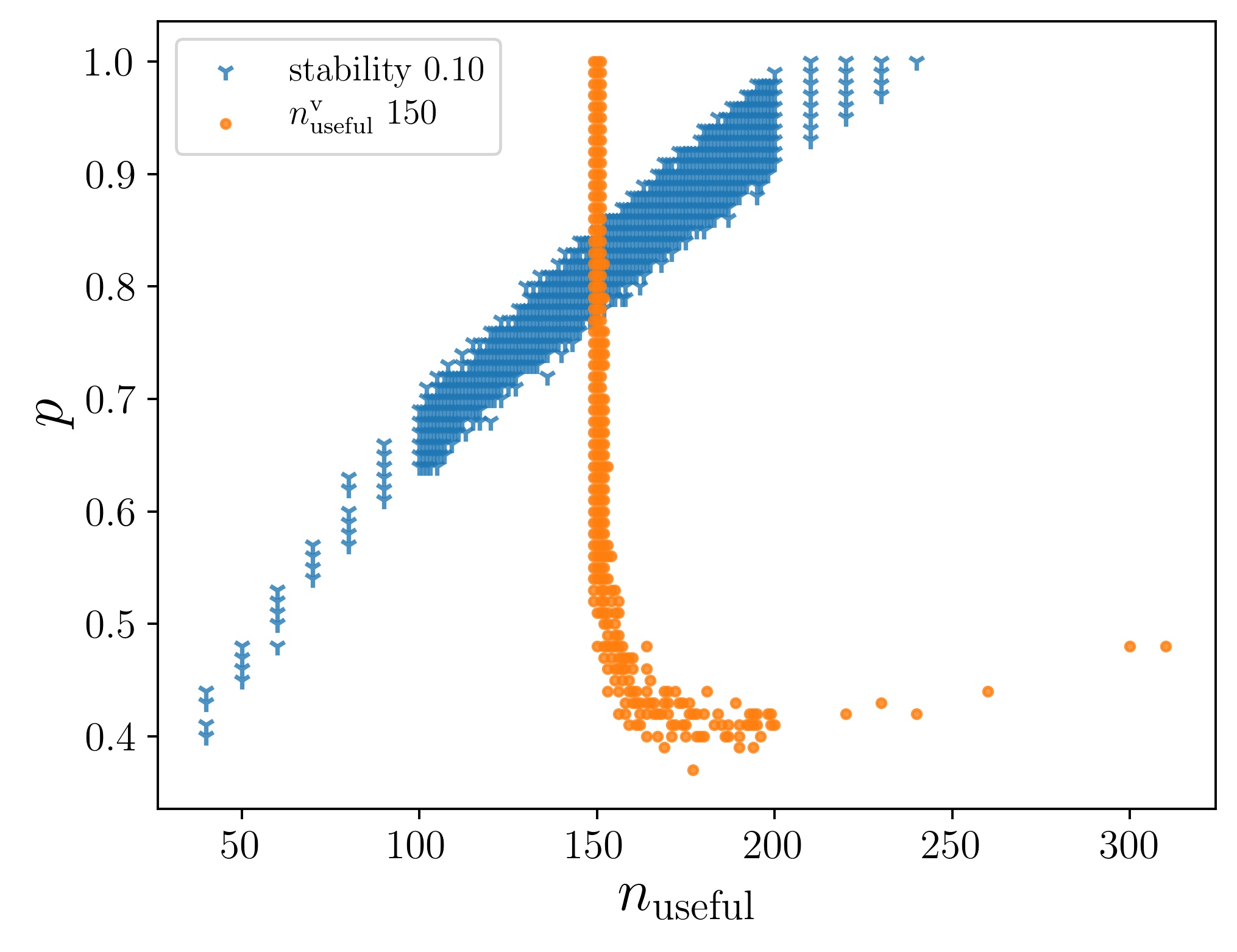}
    \includegraphics[width=0.94\linewidth]{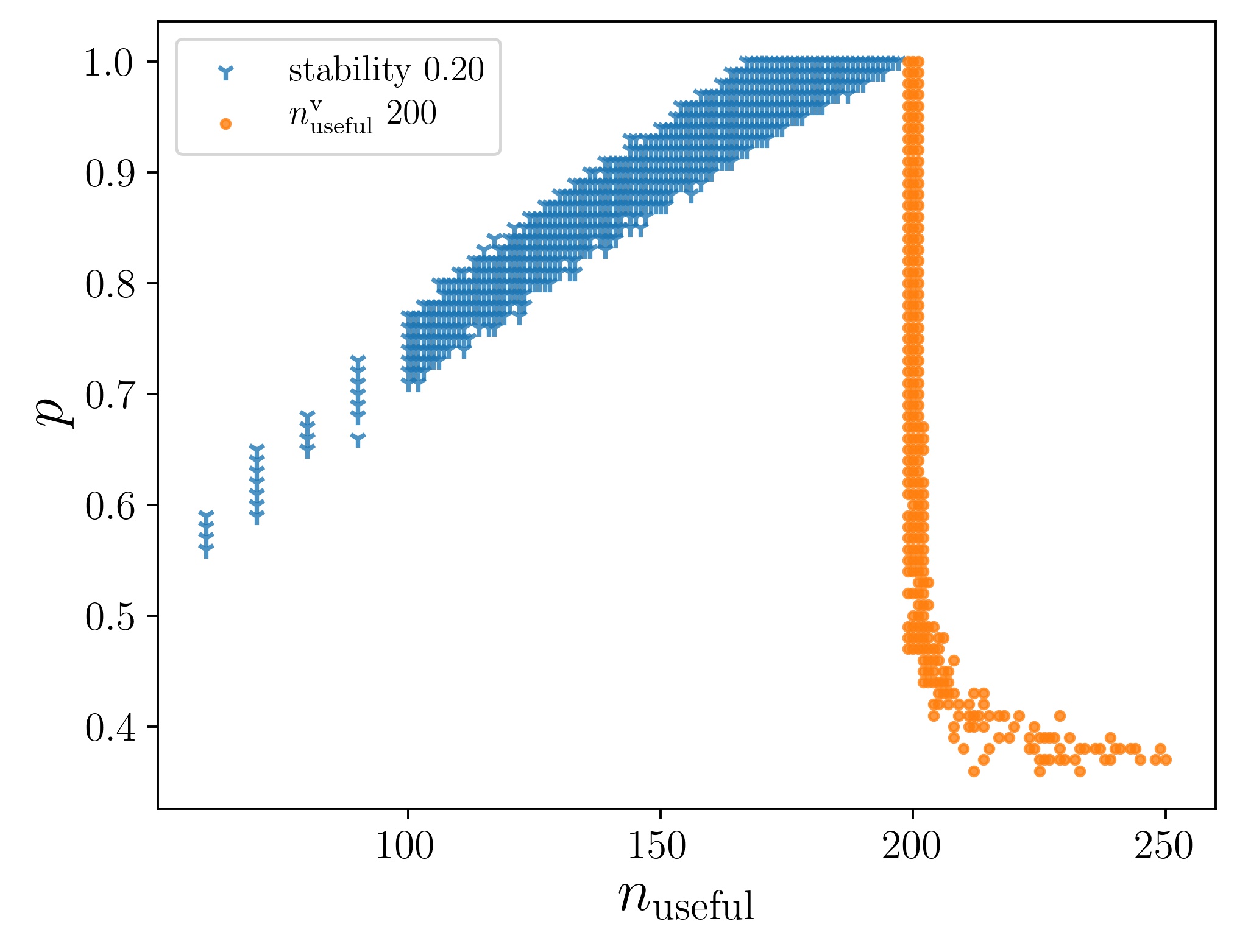}
    \caption{Estimation of the parameter $p$ for the configuration of Colon (top), Lymphoma (middle), and Prostate (bottom) datasets.
    The blue triangle points are parameter pairs $(\nmeaningful, \probability)$ such that the estimated stability value is within $0.1\pm 0.01$ (Colon), $0.1\pm 0.01$ (Lymphoma), and $0.2\pm 0.02$ (Prostate), respectively.
    The orange circle points are parameter pairs whose $\nmeaningfulverification$ value is within $60\pm 2$ (Colon), $150\pm 2$ (Lymphoma), and $150\pm 2$ (Prostate), respectively.
    $\nmeaningful$ ranges over $\nmeaningful=20,\ldots, 810$ (Colon), $\nmeaningful=40, \ldots, 360$ (Lymphoma) and $\nmeaningful=60, \ldots, 260$ (Prostate), respectively.
    $\probability$ ranges over $\probability=0.01, \ldots, 1.00$ for all datasets.}\label{supfig:p-estimation}
\end{figure}

\begin{figure}
    \centering
    \includegraphics[width=0.94\linewidth]{figure/stability_estimation/colon.jpg}\\
    \includegraphics[width=0.94\linewidth]{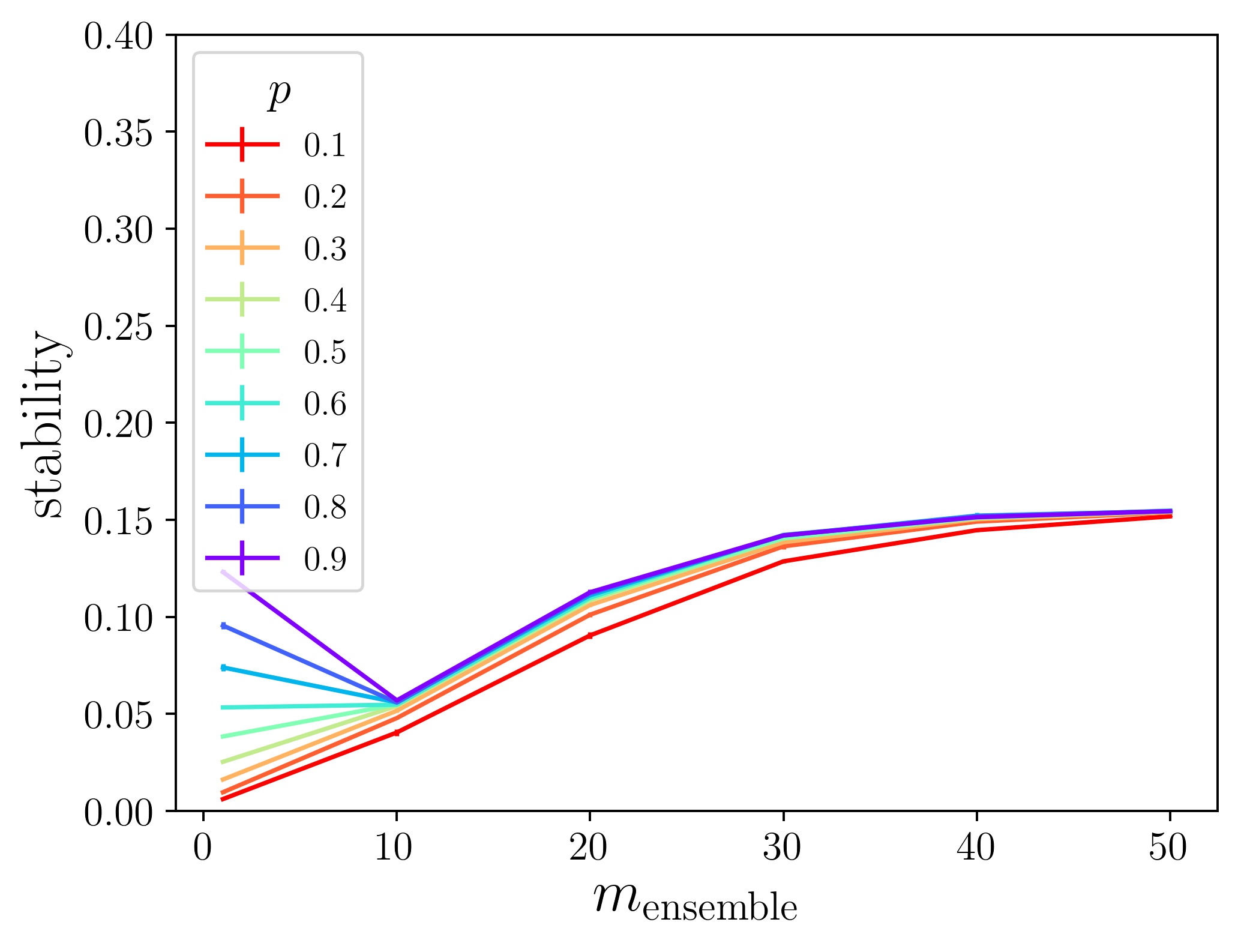}
    \includegraphics[width=0.94\linewidth]{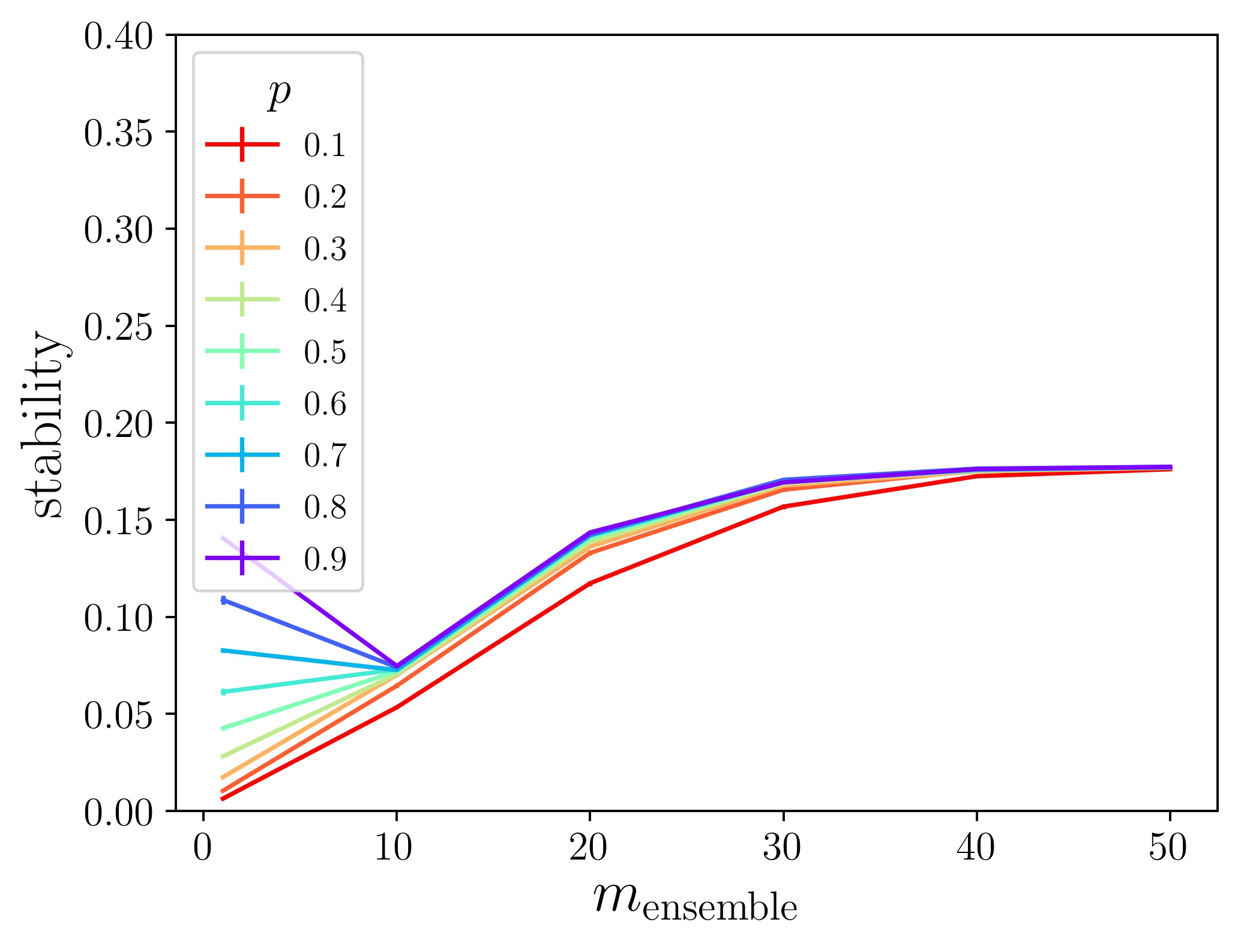}
    \caption{Estimation of stability of the simulated ensemble feature selectors for Colon (top), Lymphoma (middle) and Prostate (bottom) datasets.
    }\label{supfig:stability-estimation}
\end{figure}
\begin{figure}
    \centering
    \includegraphics[width=0.94\linewidth]{figure/stability_actual/colon.jpg}\\
    \includegraphics[width=0.94\linewidth]{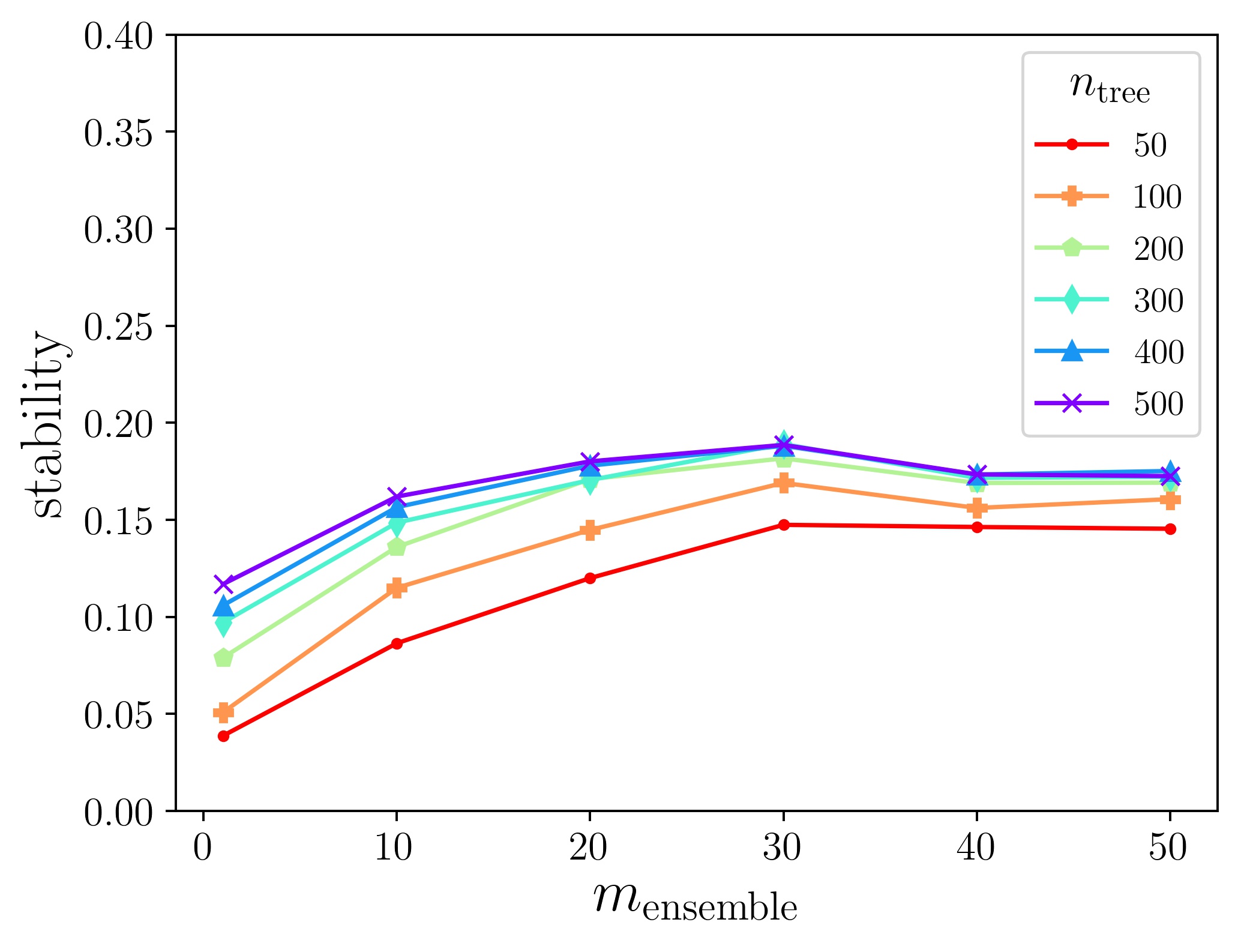}
    \includegraphics[width=0.94\linewidth]{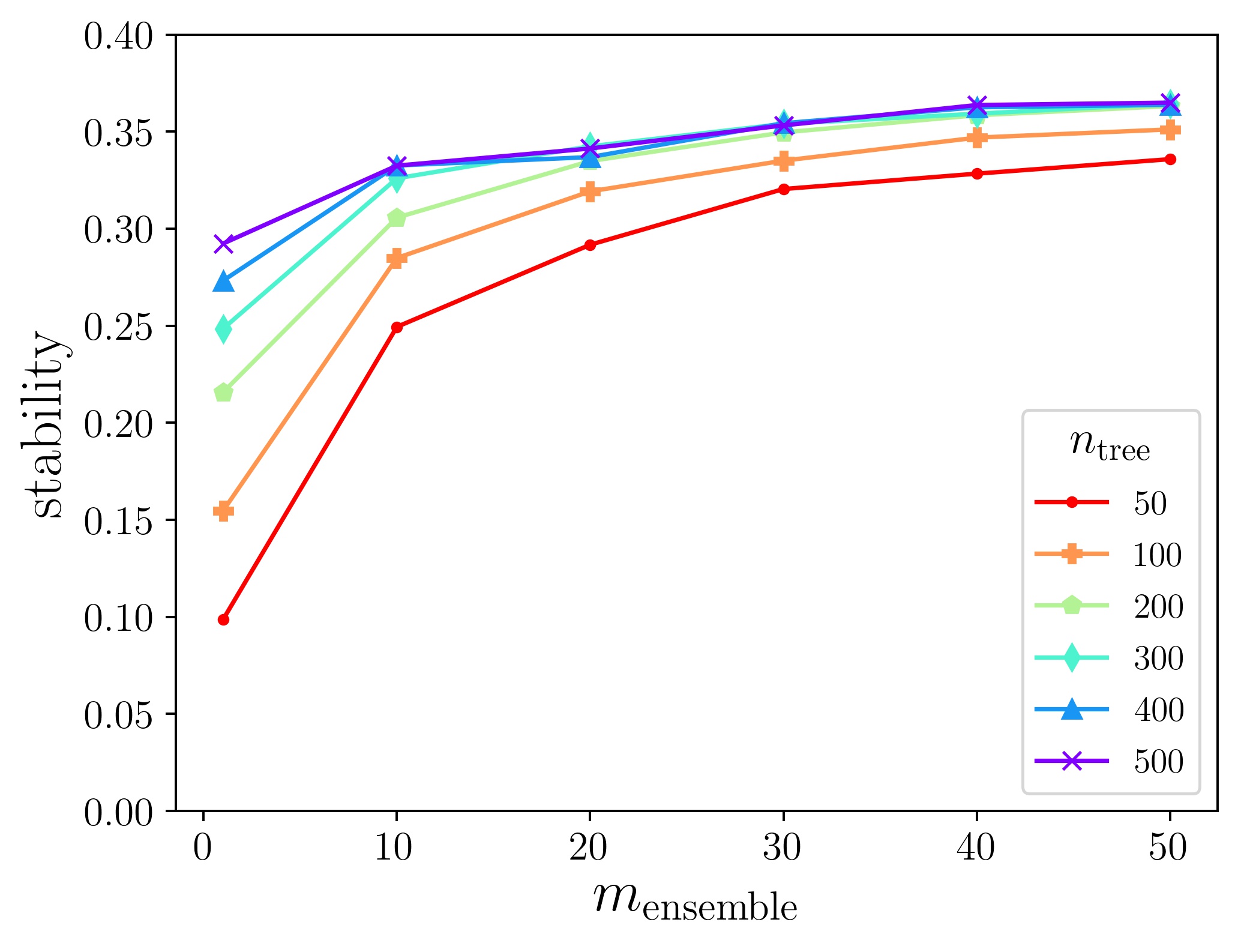}
    \caption{Stability of the real ensemble feature selectors for Colon (top), Lymphoma (middle), and Prostate (bottom) datasets.
    }\label{supfig:stability-actual}
\end{figure}
\begin{figure}
    \includegraphics[width=0.94\linewidth]{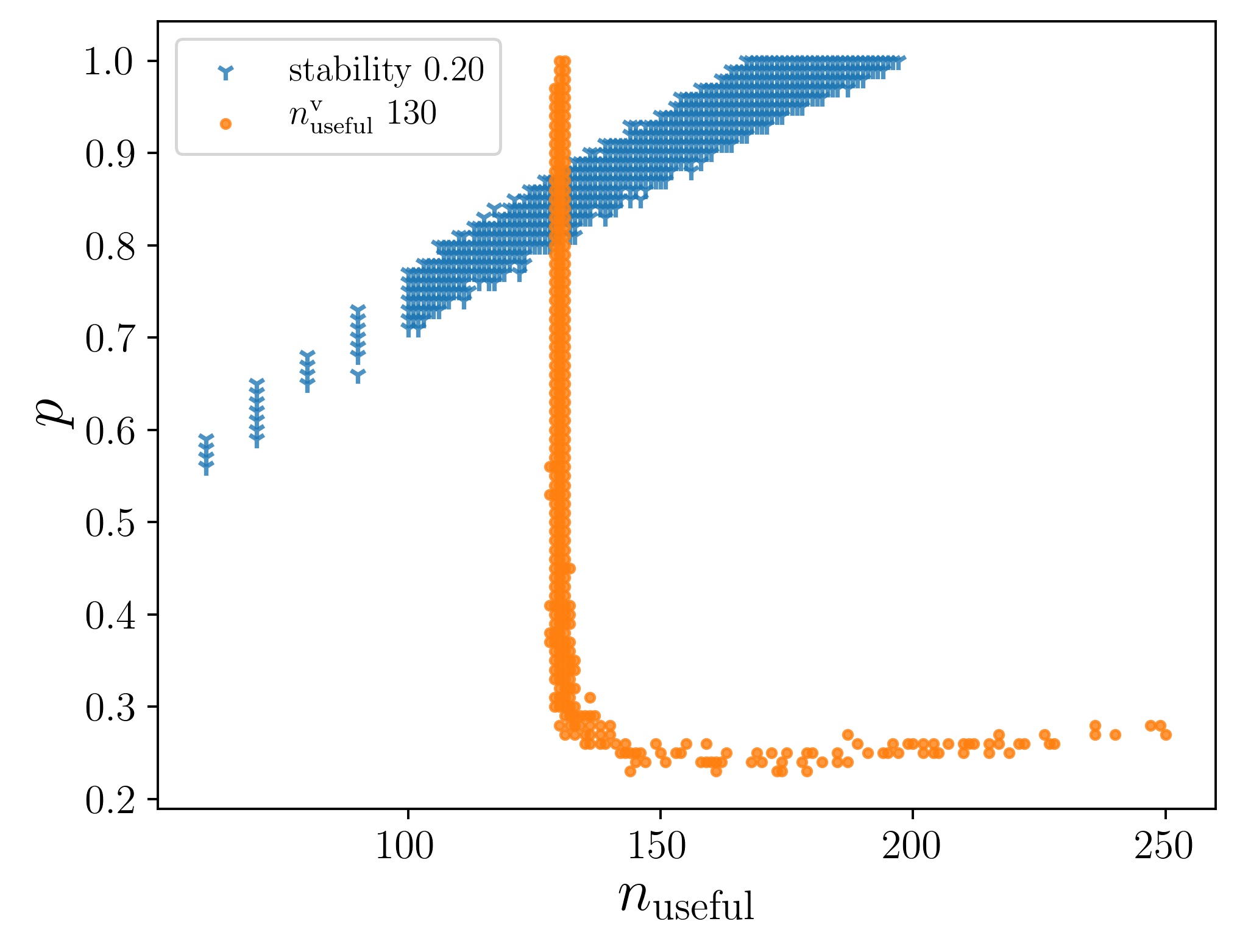}
    \includegraphics[width=0.94\linewidth]{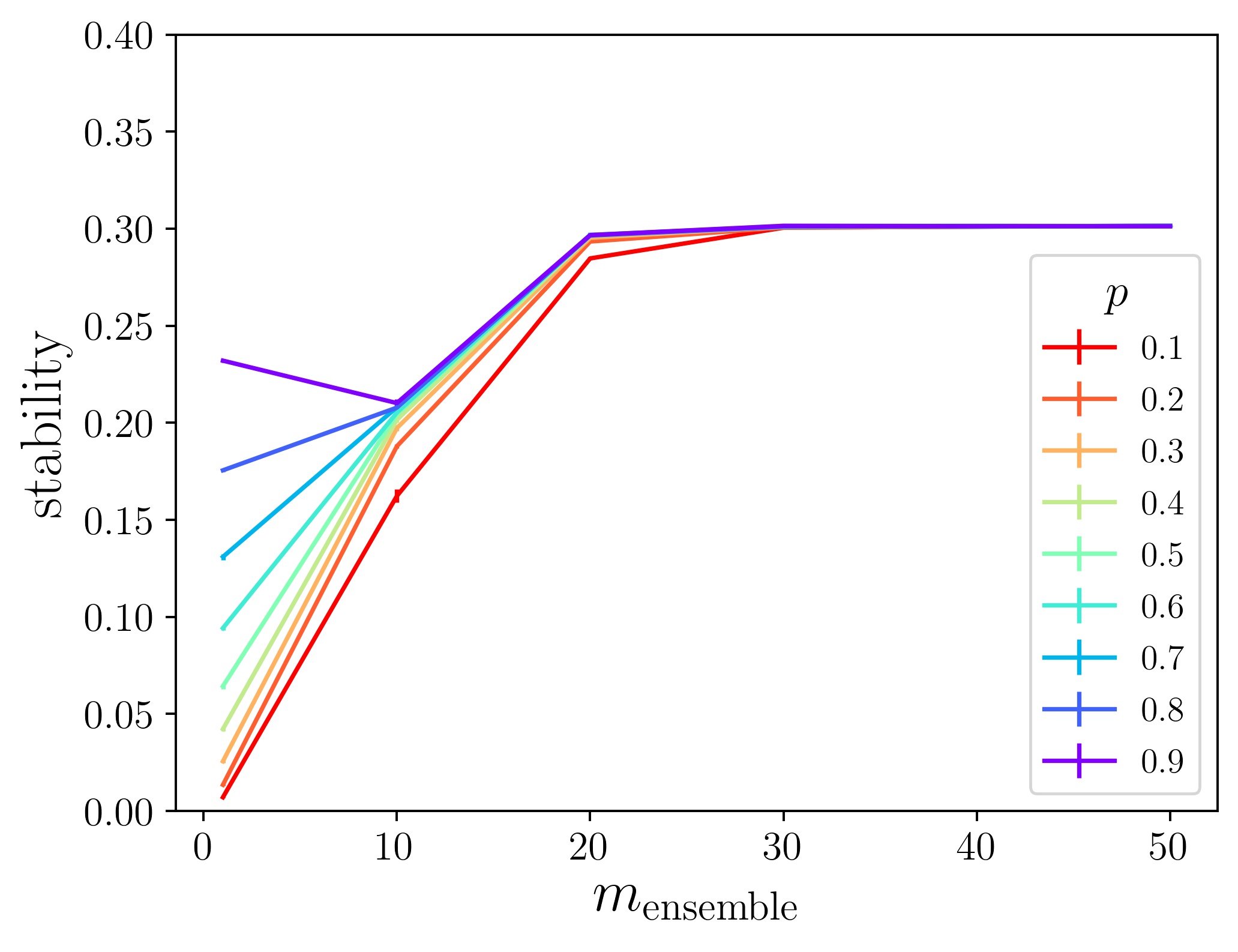}
    \caption{Experiments using the Prostate dataset when the parameter $\nmeaningful$ is set to $130$.
    (Top) Estimation of $\probability$ and verification of $\nmeaningful$. (Bottom) the stability estimation of the simulated ensemble feature selectors.}\label{supfig:prostate-results-with-calibrated-n-redundant}
\end{figure}

\begin{figure}
    \centering
    \includegraphics[width=0.94\linewidth]{figure/computation_time_real/colon.jpg}\\
    \includegraphics[width=0.94\linewidth]{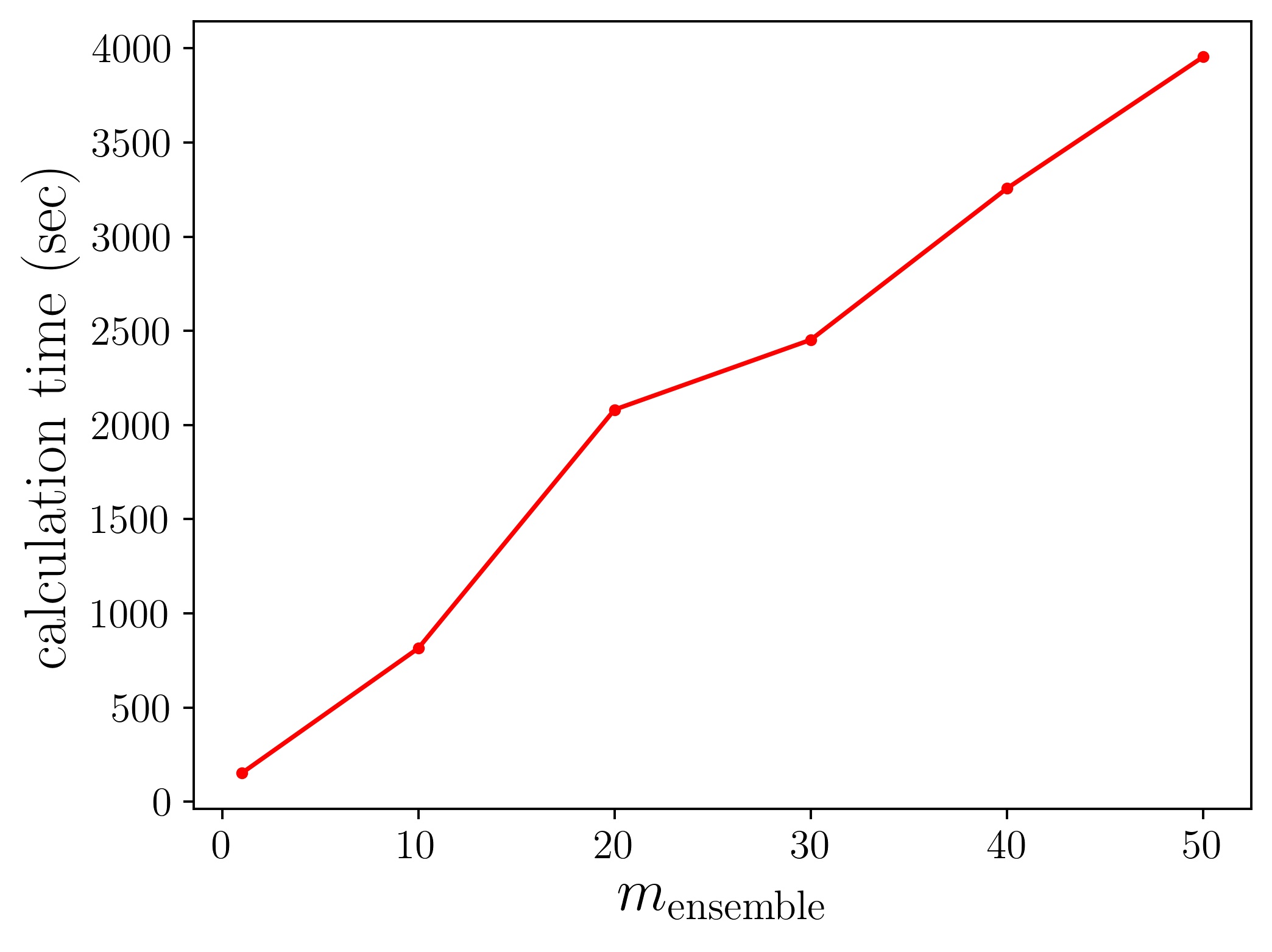}\\
    \includegraphics[width=0.94\linewidth]{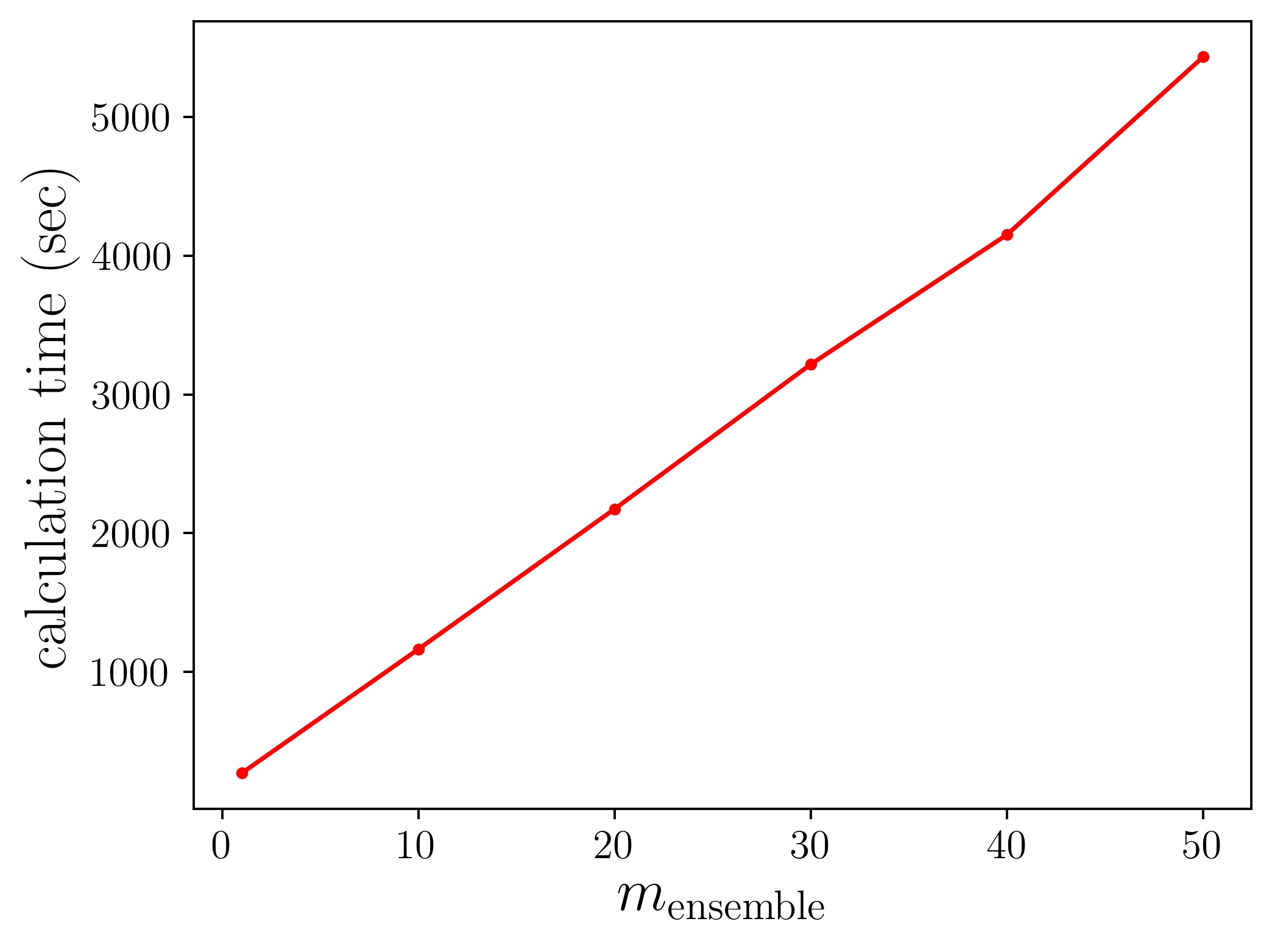}\\
    \caption{Computation time of the real ensemble feature selectors for Colon (top), Lymphoma (middle), and Prostate (bottom) datasets.
    }\label{supfig:computation-time-real}
\end{figure}
\begin{figure}
    \centering
    \includegraphics[width=0.94\linewidth]{figure/computation_time_simulation/colon.jpg}\\
    \includegraphics[width=0.94\linewidth]{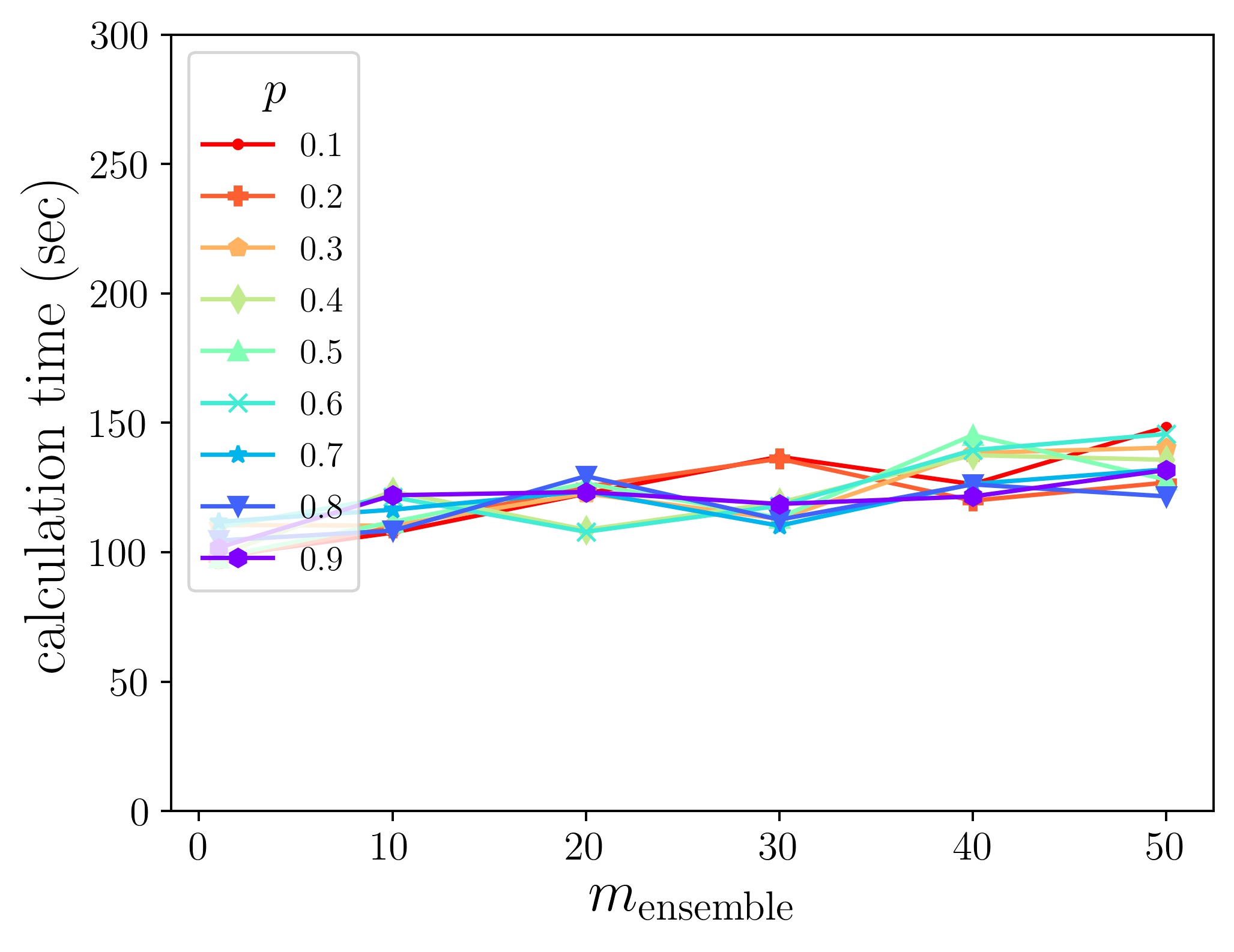}\\
    \includegraphics[width=0.94\linewidth]{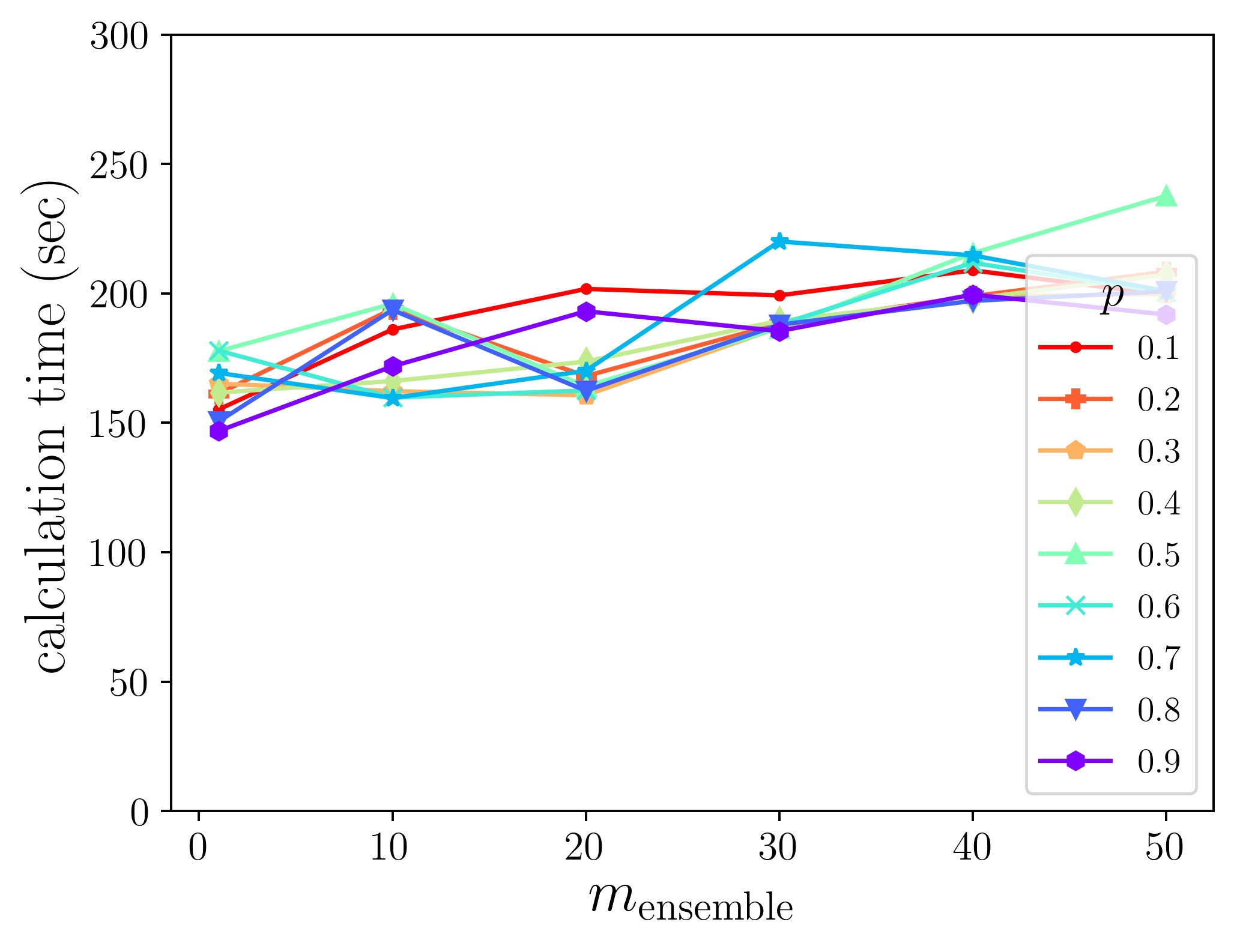}\\  
    \caption{
    Computation time of the simulated ensemble feature selectors for Colon (top), Lymphoma (middle), and Prostate (bottom) datasets.
    }\label{supfig:computation-time-simulation}
\end{figure}
 \begin{figure}[!tpb]
    \centering
    \includegraphics[width=0.94\linewidth]{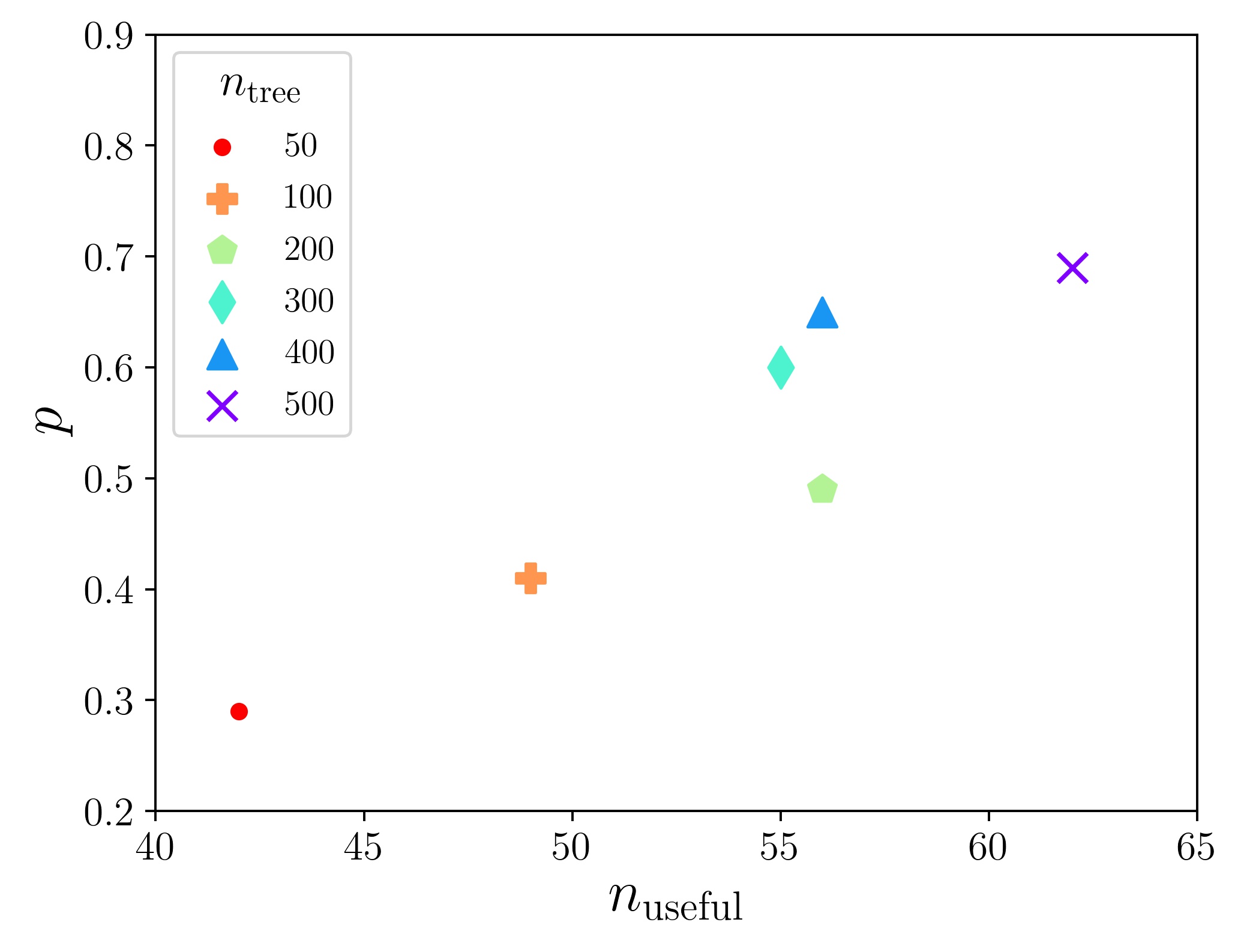}
    \includegraphics[width=0.94\linewidth]{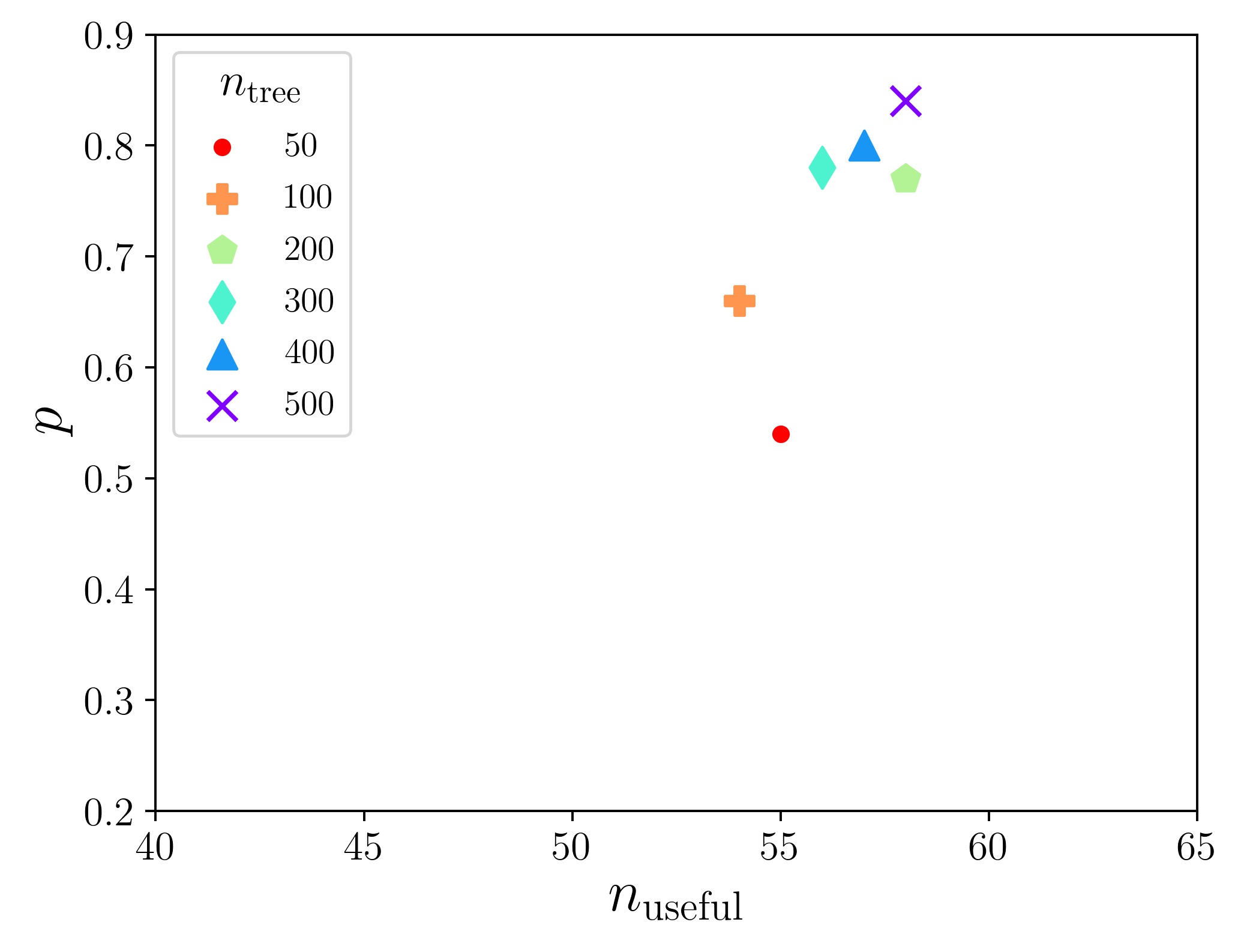}
    \caption{Parameter estimation with different normalized \textit{mtry} values for Colon dataset when normalized \textit{mtry} is $0.01$ (top) and $0.1$ (bottom). }\label{supfig:parameter-estimation-with-various-max-features}
 \end{figure}
\begin{figure}[!tpb]
    \centering
    \includegraphics[width=0.94\linewidth]{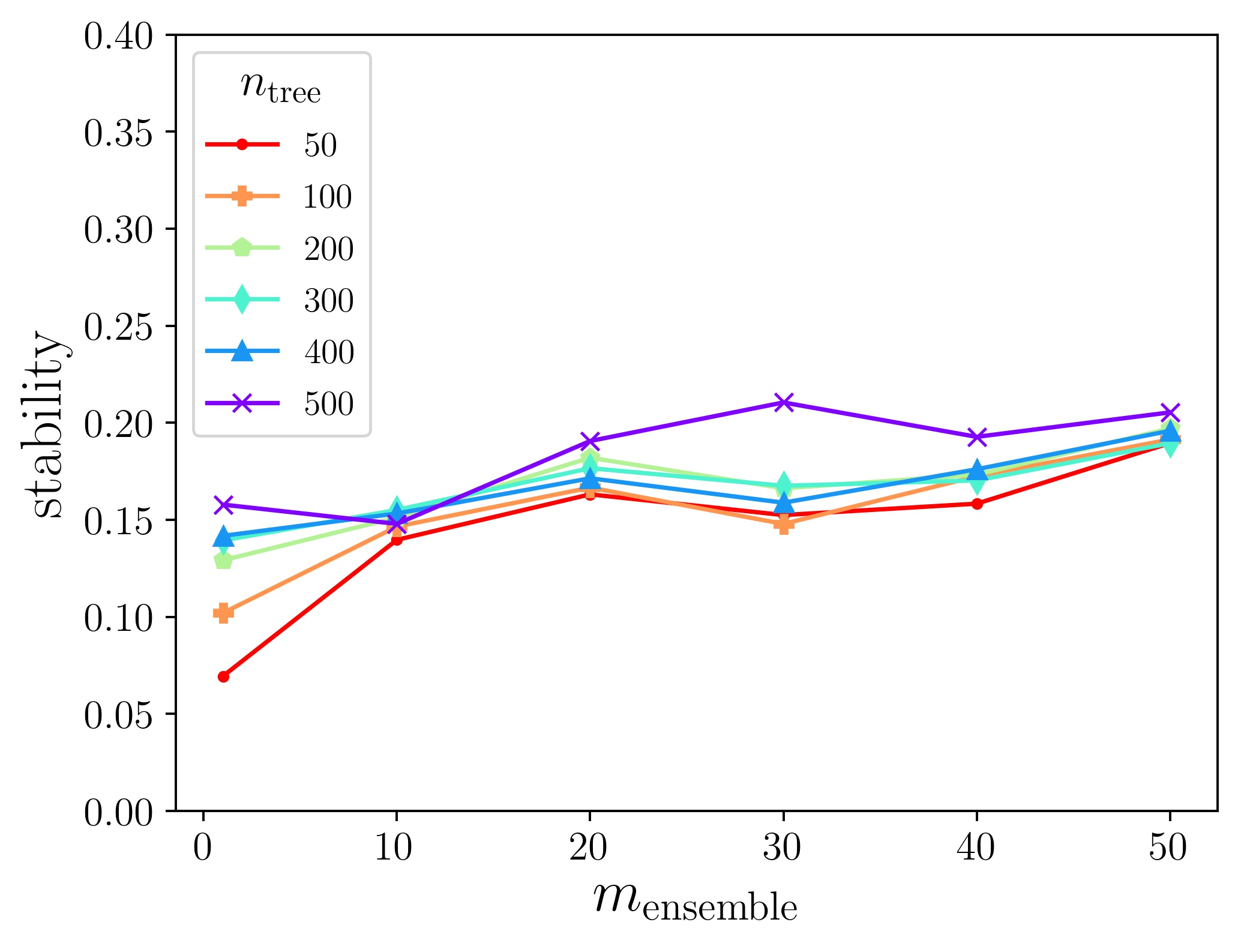}
    \includegraphics[width=0.94\linewidth]{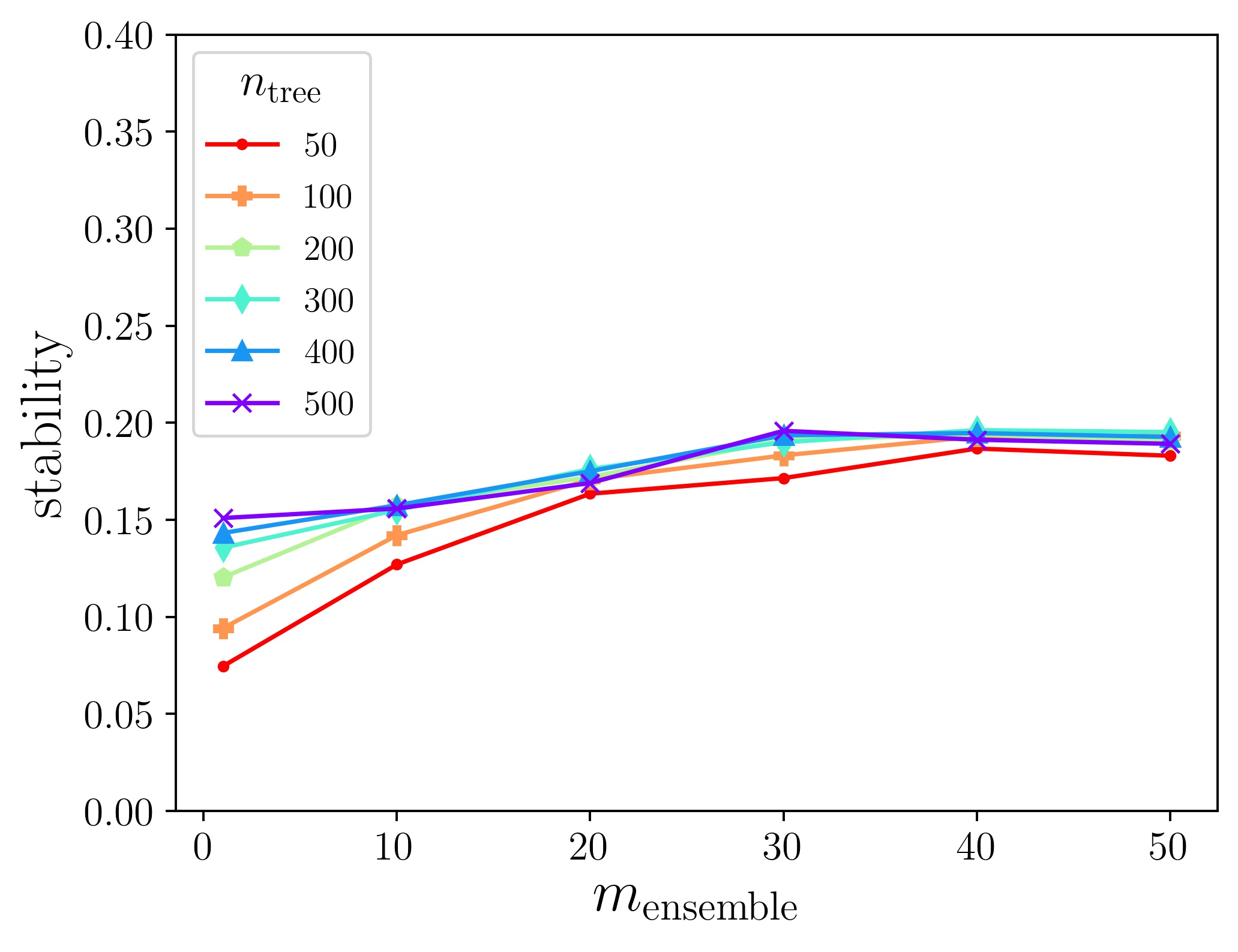}
    \includegraphics[width=0.94\linewidth]{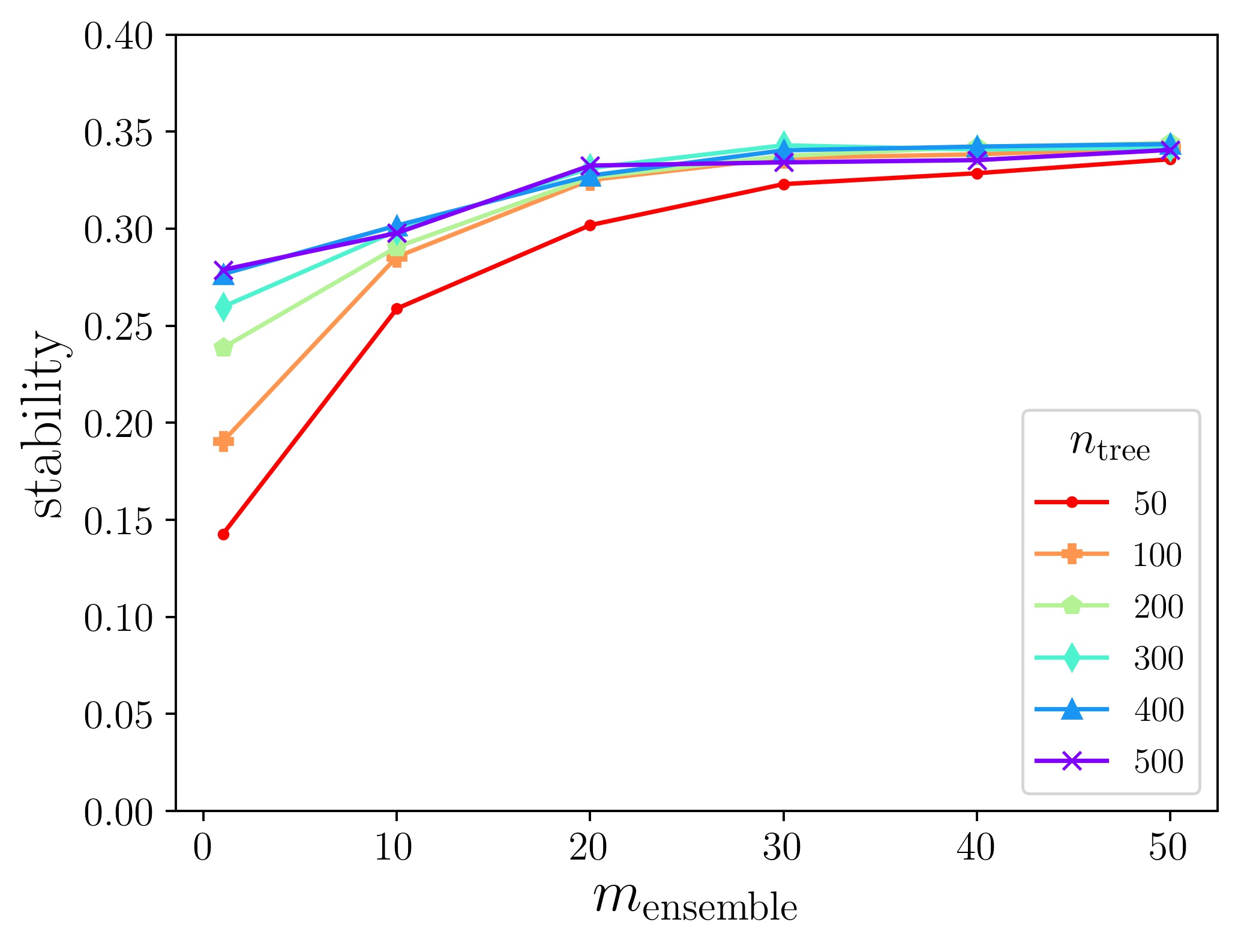}
    \caption{Stability of the real ensemble feature selectors when the normalized mtry is $0.1$ for the Colon (top), Lymphoma (middle), and Prostate (bottom) datasets.
    }\label{supfig:stability-actual-with-large-max-features}
\end{figure}

\begin{table}[p]
  \caption{Notation table. $\ast$ indicates a user-specified variable. \label{tab:notation-table}}
  \label{tab:freq}
  {\begin{tabularx}{\linewidth}{@{}lX@{}}
    \toprule
    Variable & Definition\\
    \midrule
    $D$ & Dataset${}^\ast$. \\
    $f$ & Base feature selector${}^\ast$. \\
    $\mathcal{E}$ & Ensemble algorithm${}^\ast$. \\
    $f_m$ & $m$-th weak feature selector. \\
    $\nensemble$ & Number of weak feature selectors${}^\ast$. \\
    $\nstability$ & Number of feature selectors, single or ensemble, that are used for calculating stability values${}^\ast$.\\
    $S$ & Set of features. Determined by the dataset $D$. \\
    $S_m$ & Feature subset of $S$ from which the $m$-th weak feature selector is likely to choose.\\
    $S’$ & Feature subset of $S$ from which feature selectors are likely to choose.\\
    $\nfeatures$ & $=|S|$. \\
    $\ntarget$ & Number of selected features. $|S_m|$ is set to this value${}^\ast$.\\
    $\nmeaningful$ & $=|S'|$. \\
    $\nmeaningfulverification$ & Verification value for $\nmeaningful$. \\
    $\probability$ & Probability parameter integrating the uncertainty of both datasets and feature selectors.\\
    $\nprobability$ & The number of possible values that the parameter $\probability$ can take${}^\ast$.\\
    $\threshold$ & Threshold determined by the uniform feature selectors for estimating $\nmeaningful$. \\
    $\rankm$ & Ranking of features determined by the $m$-th weak feature selector. Mathematically, $\rankm$ is the permutation of $\{1, \ldots, \nfeatures\}$.\\
  \bottomrule
  \end{tabularx}}{}
\end{table}

\begin{table}[p]
\caption{Threshold $\threshold$ determined by the maximum frequency of the uniform feature selector.\label{tab:threshold-by-random-feature-serlector}}
\resizebox{\columnwidth}{!}{
{\begin{tabular}{@{}llll@{}}\toprule
Dataset & Colon & Lymphoma & Prostate \\\midrule
$\threshold$ & 4.640 $\pm$ 0.636 & 6.029 $\pm$ 0.668 & 6.455 $\pm$ 0.703 \\
\bottomrule
\end{tabular}}{}
}
\end{table}

\end{document}